\definecolor{brightblue}{RGB}{0, 122, 255}
\definecolor{textcolor}{RGB}{0, 0, 0}
\lstdefinestyle{PythonStyle}{
    language=Python,
    basicstyle=\ttfamily\small\color{textcolor},
}
\newtheorem{lemma}{Lemma}
\newtheorem{remark}{Remark}
\newtheorem{theorem}{Theorem}
\newtheorem{proposition}{Proposition}
\title{
  The Debate on RLVR Reasoning Capability Boundary: Shrinkage, Expansion, or Both? A Two-Stage Dynamic View
}
\author{
Xinhao Yao$^{1,2}$\thanks{Work done during an internship at Ant Group.} \quad
Lu Yu$^{2}$ \quad
Xiaolin Hu$^{3}$ \quad
Fengwei Teng$^{1}$ \quad
Qing Cui$^{2}$ \quad
Jun Zhou$^{2}$ \quad
Yong Liu$^{1}$\thanks{Corresponding author: \texttt{liuyonggsai@ruc.edu.cn}.} \\[1ex]
$^{1}$Renmin University of China\qquad
$^{2}$Ant Group \qquad
$^{3}$Xiamen University
}
\begin{document}
\maketitle
\begin{center}
\begin{abstractbox}
The ongoing debate on whether reinforcement learning with verifiable rewards (RLVR) expands or shrinks the reasoning capabilities of large language models (LLMs) remains unresolved. Some studies contend that RLVR mainly improves sampling efficiency but at the expense of diversity and exploratory capacity, resulting in \textit{capability boundary shrinkage}. In contrast, others demonstrate that prolonged training can lead to the emergence of novel reasoning strategies, suggesting \textit{capability boundary expansion}. To reconcile these contradictory findings, we theoretically and empirically show that \textit{\textbf{both}} perspectives are partially valid—each aligning with a separate phase in an inherent two-stage probability mass dynamic: (1) \textit{Exploitation} stage: initially, the model primarily samples explored high-reward and low-reward tokens, while rarely selecting the potentially optimal token. Positive advantage estimates increase the probability of high-reward tokens and decrease those of low-reward tokens, yet the optimal token’s probability remains largely unchanged during this stage. (2) \textit{Exploration} stage: as training advances, the growth rate of previously acquired high-reward tokens slows as their probabilities approach saturation. When a potentially optimal token—now receiving positive advantage estimates—is occasionally sampled, its probability increases, while those of the originally high-reward tokens decrease. This dynamic suggests that over-exploitation during the exploitation stage may lead to capability boundary shrinkage, whereas prolonged training into the exploration stage can promote an expansion of the reasoning capability boundary. Building upon our insights, we revisit the potential of only using relative negative gradients for prolonging training, providing a theoretical and empirical foundation for the development of more advanced reasoning capabilities.
\vspace{1em}

\raisebox{-0.35ex}{\includegraphics[width=3.8cm]{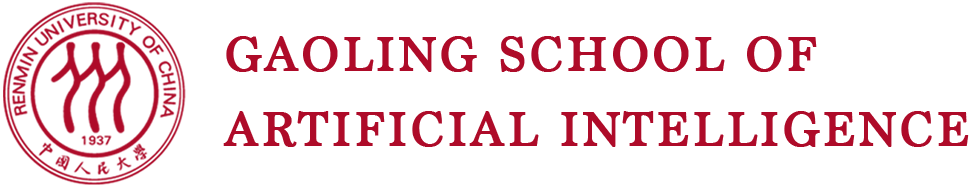}}%
\hfill
\raisebox{-0.35ex}{\includegraphics[width=1.6cm]{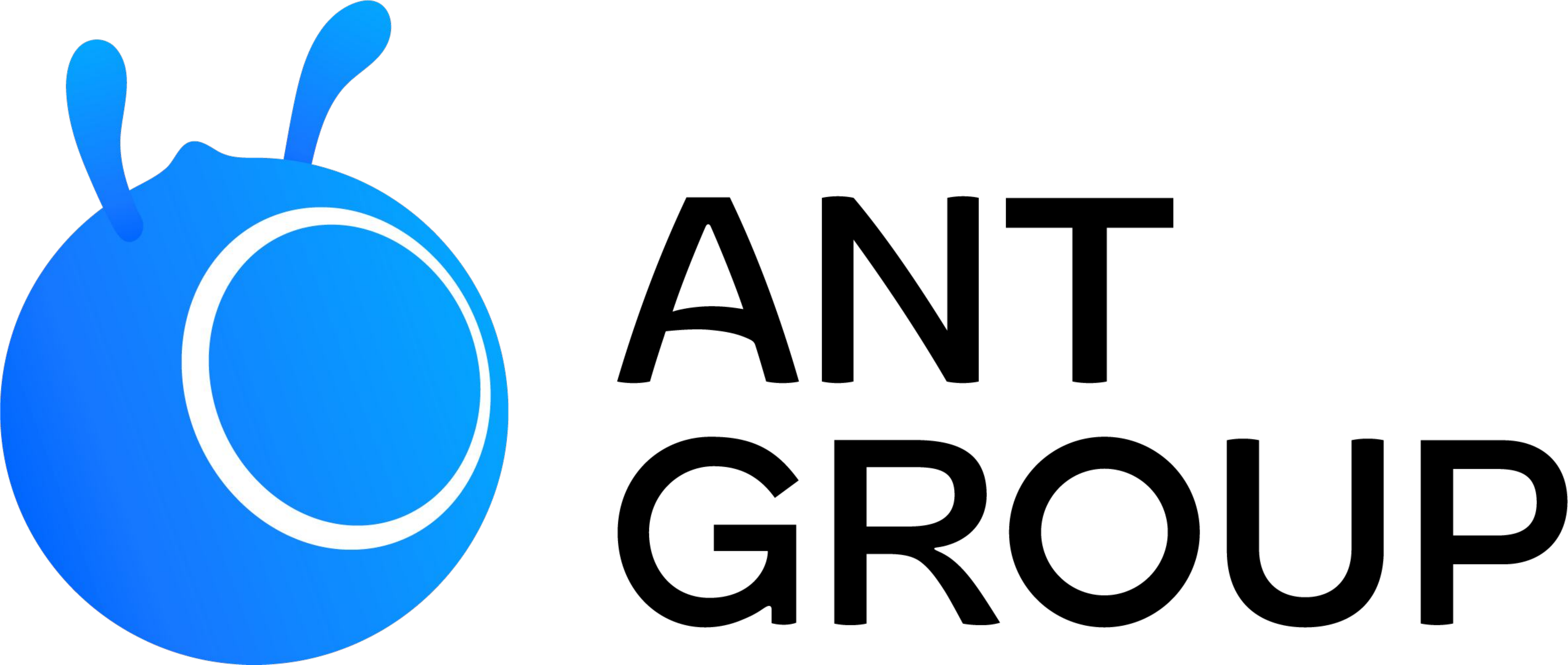}}%
\end{abstractbox}
\end{center}

\section{Introduction}
Reinforcement learning with verifiable rewards (RLVR) has become a key paradigm for substantially enhancing the reasoning abilities of large language models (LLMs), as exemplified by advanced models such as OpenAI’s O1 and O3 \citep{jaech2024openai,openai2025o3} and DeepSeek-R1 \citep{guo2025deepseek}. By optimizing pre-trained or chain-of-thought (CoT) \citep{cot} fine-tuned models through verifiable reward signals, RLVR enables LLMs to excel in complex logical tasks such as mathematics \citep{luo2025deepscaler,yue2025vapo,zeng2025simplerl} and programming \citep{liu2025code,luo2025deepcoder}.

Despite empirical successes, a fundamental question is still hotly debated: \textit{does RLVR genuinely expand the reasoning capabilities of base models beyond their original boundaries?} Current evidence is sharply divided. (1) One line of research \citep{yue2025does,zhao2025echo,dang2025assessing,he2025rewarding,ma2025learning,shah2025rethinking,gandhi2025cognitive} argues for \textbf{capability boundary shrinkage}, contending that while RLVR improves sampling efficiency, it fails to produce genuinely novel reasoning strategies and may even induce a progressive narrowing of reasoning capabilities during training. Empirical evidence from \citet{yue2025does} shows that although RLVR-trained models perform better under small-$k$ sampling (e.g., $k=1$), base models achieve higher Pass@$k$ when $k$ is large. Similarly, \citet{cui2025entropy} document a sharp entropy collapse during training, resulting in overly deterministic behavior \citep{zhang2025policy} and reduced exploratory effectiveness. (2) In contrast, another body of work \citep{liu2025prorl,wen2025reinforcement,liu2025scaling,wu2025invisible,yuan2025llms,wang2025emergent,sun2025deltacodedoesrlunlock} provides evidence supporting \textbf{capability boundary expansion}. \citet{liu2025prorl} attribute previous evidence of capability boundary shrinkage to the premature termination of RL training, which disrupts learning before novel reasoning capabilities can fully develop. Through prolonged training, they further demonstrate that RLVR can explore and populate
new regions of solution space over time. Meanwhile, \citet{wu2025invisible} experimentally show that RLVR can occasionally expand empirical support, producing novel correct solutions beyond the original reach of the base model. 

\textit{The debate between these two lines of evidence centers on empirical results; however, the underlying mechanisms responsible for these contradictory findings remain unclear}.  To elucidate the mechanisms, we focus on the evolution of the policy model’s probability mass distribution—termed the \textit{probability mass dynamics}. As a conceptual starting point, consider that the search tree \citep{yue2025does,zhu2025reasoning,hao2024training} for any given prompt is built through iterative sampling from the policy. This tree grows exponentially at a rate of $\mathcal{O}(V^T)$, where $V$ denotes the vocabulary space (token set) size and $T$ the maximum generation length. Crucially, policy updates can be viewed as a dynamic reallocation of probability mass across the search tree, thereby shaping the reasoning capability boundary.

Through an integrated theoretical and empirical analysis (Section \ref{main:sec_pmdte}), \textbf{we demonstrate that both lines of evidence hold validity to some extent}—each corresponding to a distinct stage within a two-stage dynamic of probability mass. Specifically, since the logit for token $v$ is directly tied to its policy probability—a larger (smaller) logit results in a higher (lower) probability—we analyze the policy gradient of the training objective and derive a bidirectional update rule for the logits (i.e., the pre-Softmax values; Lemma \ref{main:lemma1}). According to this rule, updates to the logits depend on both the advantage estimate $\hat{A}$ and the current policy distribution $\pi$. Under practical optimization settings such as GRPO \citep{deepseekmath} (where multiple responses are sampled per prompt), Theorem \ref{main:proposition2} establishes that the expected logit update for token $v$ is proportional to $\pi(v)\left[(1-\pi(v))\hat{A}(v) - \sum_{u \neq v} \pi(u)\hat{A}(u)\right]$.

From this view, the overall dynamic appears to unfold in two distinct stages. (1) \textbf{Exploitation stage}: initially, the model predominantly samples the already-explored high-reward token and the low-reward token, while the potentially optimal token is selected only infrequently. Driven by positive advantage estimates, the probability of the high-reward token increases, whereas that of the low-reward token decreases. However, the probability of the potentially optimal token remains largely unchanged throughout this stage. This behavior suggests that \textit{over-exploitation during this stage may result in a shrinkage of the capability boundary}. (2) \textbf{Exploration stage}: as training progresses, the growth rate of the high-reward token previously explored slows as its probability approaches near saturation ($1 - \pi \rightarrow 0$). When the potentially optimal token—now associated with positive advantage estimates—is occasionally sampled, its probability increases, while that of the formerly high-reward token declines. A key characteristic of this dynamic is the transition of the relative negative sample: from the initially low-reward token to the high-reward token. This implies that with prolonged training, gradient updates can be progressively redirected toward tokens with low initial probability but high potential, once high-probability tokens have stabilized, ultimately \textit{expanding the reasoning capability boundary}. We illustrate these theoretical insights with a toy example (Section \ref{main:toy}).

Building on our theoretical and experimental insights, a direct way to expand the reasoning capability boundary and mitigate shrinkage is to prolong training while concentrating policy probability updates exclusively on optimizing relative negative samples (denoted -N, Section \ref{main:subsetup}) throughout the learning process. Empirical investigations (Section \ref{main:-n}) of our strategy—implemented in widely adopted algorithms (e.g. GRPO, GSPO \citep{gspo}) on benchmark datasets and open-source LLMs verify that GRPO-N (GSPO-N) achieves competitive and stable performance improvements while largely preserving the base model's diversity, demonstrating the potential for prolonged training. Notably, analysis of the training process reveals instances where incorrect code is initially generated but is later refined and corrected through iterative reflection. Unlike GRPO, which reinforces the entire trajectory—including error-prone steps—GRPO-N effectively prevents such reinforcement.

$\diamond$ \textbf{Main contributions}. Briefly, this study unveils the underlying mechanisms responsible for the heated debate (boundary shrinkage or expansion) in RLVR from both theoretical and practical perspectives. We emphasize the essential role of fine-grained probability mass allocation and establish a theoretical and empirical basis for understanding the impact of RLVR on reasoning capabilities.

\subsection{More Related Works}
Broadly speaking, our work builds upon lines of research in reinforcement learning for LLM reasoning, LLM learning dynamics, and gradient analysis in preference optimization. A comprehensive review of related work is included in Appendix \ref{app:works} due to page constraints.
\newpage
\section{Preliminaries and Background}\label{main:sec_pre}
In this section, we describe the core components of our study by reviewing some basic notations. 

\textbf{RLVR}. Reinforcement learning with verifiable rewards (RLVR) is a paradigm for improving models on tasks with objectively verifiable outcomes. In this formulation, an autoregressive language model is treated as a policy $\pi_{\theta}$ (parameter $\theta$). For a given query $\mathbf{x}$ from a prompt set $\mathcal{D}$, the probability of generating a response $\mathbf{y}$ is defined as $\pi_{\theta}(\mathbf{y}\mid \mathbf{x})=\prod_{t=1}^{|\mathbf{y}|} \pi_{\theta}(y_t \mid \mathbf{x}, \mathbf{y}_{<t})$. A deterministic reward function $r$ assigns a scalar value indicating the correctness of the full response $\mathbf{y}$ to the prompt $\mathbf{x}$. Each token in $\mathbf{y}$ receives the same reward (1 only if the final answer is correct, and 0 otherwise). The objective is to minimize the loss: $\mathcal{L}_{\text{RLVR}}(\theta) = -\mathbb{E}_{\mathbf{x} \sim \mathcal{D},\ \mathbf{y} \sim \pi_{\theta}(\cdot\mid\mathbf{x})} \left[ r(\mathbf{x}, \mathbf{y}) \right]$, where $r(\mathbf{x}, \mathbf{y}) \in [0,1]$.

\textbf{A unified framework for policy gradient optimization}. Building on the work of \citep{deepseekmath,liu2025intro,swamy2025all}, we consider a unified objective $\mathcal{J}$ that establishes connections among various optimization methods:
\begin{equation}\label{eq:main1}
     \mathcal{J}_{\text{RLVR}}(\theta) = \mathbb{E}_{\mathbf{x} \sim \mathcal{D},\ \mathbf{y} \sim \pi_{\theta_{\text{old}}}(\cdot\mid\mathbf{x})}
\left[
\frac{1}{|\mathbf{y}|} \sum_{t=1}^{|\mathbf{y}|} \min\left(
w_t(\theta) \hat{A}_t,
\text{clip}\big(w_t(\theta), 1 - \epsilon, 1 + \epsilon\big) \hat{A}_t
\right)
\right],
\end{equation}
where $\epsilon$ is a clipping hyperparameter, $\text{clip}(\cdot)$ is the clipping operation, and the the importance ratio of the token $y_t$ is defined as $w_t(\theta)=\frac{\pi_{\theta}(y_t \mid \mathbf{x}, \mathbf{y}_{<t})}{\pi_{\theta_{\text{old}}}(y_t \mid \mathbf{x}, \mathbf{y}_{<t})}$ (the current policy $\pi_{\theta}$ and the old policy $\pi_{\theta_{\text{old}}}$). $\hat{A}_t$ is the advantage of current token and is implemented differently across optimization methods:

$\bullet$ \textbf{PPO} (Proximal Policy Optimization \citep{schulman2017proximalpolicyoptimizationalgorithms,ouyang2022traininglanguagemodelsfollow}). $\hat{A}_t$ is computed by applying Generalized Advantage Estimation (GAE) \citep{schulman2018highdimensionalcontinuouscontrolusing}, based on the value model. This incurs considerable computational and memory overhead, and its effectiveness critically depends on the reliability of its value estimation.

$\bullet$ \textbf{GRPO} (Group Relative Policy Optimization \citep{deepseekmath}). To reduce variance, GRPO and its variants (e.g., DAPO \citep{yu2025dapoopensourcellmreinforcement} \& Dr.GRPO \citep{liu2025understandingr1zeroliketrainingcritical}) eliminate reliance on a value model by using Monte Carlo estimates to compute the relative advantage across a group of responses $\{\mathbf{y}_i\}_{i=1}^G \sim \pi_{\theta_{\text{old}}}$ to the same query (where $G$ is the group size and all token in $\mathbf{y}_{i}$ share the same relative advantage): 
$$w_{i,t}(\theta) = \frac{\pi_{\theta}(y_{i,t} \mid \mathbf{x}, \mathbf{y}_{i,<t})}{\pi_{\theta_{\text{old}}}(y_{i,t} \mid \mathbf{x}, \mathbf{y}_{i,<t})},\quad \hat{A}_{i,t} = \hat{A}_{i} = \frac{r(\mathbf{x}, \mathbf{y}_i) - \text{mean}\left(\{r(\mathbf{x}, \mathbf{y}_i)\}_{i=1}^G\right)}{\text{std}\left(\{r(\mathbf{x}, \mathbf{y}_i)\}_{i=1}^G\right)}.$$

$\bullet$ \textbf{GSPO} (Group Sequence Policy Optimization \citep{gspo}). Given that the token-level importance ratio $w_{i,t}$ in GRPO does not align with sequence-level rewards, GSPO introduces a sequence-level importance ratio $w_i$ based on sequence likelihood \citep{zheng-etal-2023-click}:
$$w_i(\theta) = \left( \frac{\pi_\theta (\mathbf{y}_i \mid \mathbf{x})}{\pi_{\theta_{\text{old}}} (\mathbf{y}_i \mid \mathbf{x})} \right)^{\frac{1}{|\mathbf{y}_i|}} = \exp \left( \frac{1}{|\mathbf{y}_i|} \sum_{t=1}^{|\mathbf{y}_i|} \log \frac{\pi_\theta (y_{i,t} \mid \mathbf{x}, \mathbf{y}_{i,<t})}{\pi_{\theta_{\text{old}}} (y_{i,t} \mid \mathbf{x}, \mathbf{y}_{i,<t})} \right).$$

To better understand the model's learning dynamics under this binary outcome reward setting, we omit the regularization components\footnote{Regularization components are widely regarded as mechanisms for ensuring training stability. Moreover, studies \citep{hu2025openreasonerzeroopensourceapproach,gpg} indicate that omitting them does not impair performance when others are properly tuned.} (e.g., KL term \& clipping operation). That is, the policy gradient $\nabla_{\theta}\mathcal{J}_{\text{RLVR}}(\theta)$ can be simplified to $\mathbb{E} \left[\frac{1}{|\mathbf{y}|} \sum_{t=1}^{|\mathbf{y}|} w_t(\theta)\hat{A}_{t} \nabla_{\theta} \log \pi_{\theta}(y_{t} \mid \mathbf{x}, \mathbf{y}_{<t}) \right]$ with respect to $\theta$. Specifically, taking GRPO as an example (Appendix \ref{app:proof_eq2} for derivation):
\begin{align}\label{eq:main2}
 \nabla_{\theta}\mathcal{J}_{\text{GRPO}}(\theta) \ = \mathbb{E}_{\mathbf{x},\{\mathbf{y}_i\}_{i=1}^G } \left[ \frac{1}{G} \sum_{i=1}^{G} \frac{1}{|\mathbf{y}_i|} \sum_{t=1}^{|\mathbf{y}_i|} 
\underbrace{w_{i,t}(\theta)\hat{A}_{i,t}}_{\text{coefficient }} 
\nabla_{\theta} \log \pi_{\theta}(y_{i,t} \mid \mathbf{x}, \mathbf{y}_{i,<t}) \right].
\end{align}
\begin{remark}
    Intuitively, if we set $\hat{A}_{i,t}=1$ and $w_{i,t}=1$ while all $\mathbf{y}_i$ are correct responses, then Eq.(\ref{eq:main1}) essentially performs maximum likelihood estimation, i.e., supervised fine-tuning (SFT). Furthermore, Eq.(\ref{eq:main2}) indicates that the scalar $w_{i,t}\hat{A}_{i,t}$ can be interpreted as a weighting coefficient that adjusts the log-likelihood term. This implies that RLVR methods can be viewed as a form of reweighted SFT, where correct responses and incorrect responses contribute positive and negative gradients, respectively \citep{deepseekmath,gpg,zhu2025surprising,deng2025effectnegativegradientgroup,abdolmaleki2025learning,chen2025spectral}. When $\hat{A}_{i,t}$ is calculated from a comparison of average rewards across groups (e.g., GRPO), the resulting gradient is named the relative policy gradient.
\end{remark}

\section{Probability Mass Dynamics}\label{main:sec_pmdte}
As described above, we begin by considering a standard task that involves generating a reasoning sequence. In this setting, the model learns a policy $\pi_{\theta}(\mathbf{y} \mid \mathbf{x}) = \prod_{t=1}^{T} \pi_{\theta}(y_t \mid \mathbf{x}, \mathbf{y}_{<t}) \in \mathbb{R}^{V\times T}$ to map an input $\mathbf{x}$ to a sequence of predictions $\mathbf{y} = \{y_1, \dots, y_T\}$, where $\mathbf{y} \in \mathcal{V}^T$, $\mathcal{V}$ is the vocabulary space of size $V$, and $T$ denotes the maximum generation length. Conceptually, the reasoning process can be regarded as a tree search \citep{yue2025does,zhu2025reasoning,hao2024training}. A search tree is constructed for a given problem by iteratively sampling from the policy model. This process leads to exponential growth in the tree size, $\mathcal{O}(V^T)$, reflecting an open-ended and combinatorially infinite reasoning space \citep{secondhalf}. 

Crucially, policy updates can be viewed as dynamically reallocating probability mass over the search tree, thereby shaping the boundary of reasoning capability. Here, we specifically focus on the evolution of the policy model’s probability distribution—referred to as probability mass dynamics.
 
$\diamond$ Learning dynamics offer critical insights into the key challenges and counterintuitive behaviors of deep learning \citep{renlearning}, with early explanations pointing to network ``stiffness" \citep{fort2019stiffness} or ``local elasticity" \citep{he2020local,deng2021toward}. 
To track the evolution of the probability distribution, we monitor the logits $\mathbf{z}^{\theta} \in \mathbb{R}^{V\times T}$ and the log probabilities $\log \pi_{\theta}(\mathbf{y}\mid \mathbf{x})$, where $\pi_{\theta}$ is derived from $\mathbf{z}^{\theta}$ via a column-wise Softmax $\pi_{\theta}(\cdot \mid \mathbf{x}, \mathbf{y}_{<t}) = \text{Softmax}(\mathbf{z}^{\theta}(\mathbf{x}, \mathbf{y}_{<t}))$. The \textbf{probability mass dynamics} are then defined as:
\begin{align}
   \Delta \mathbf{z}^{l}(\mathbf{x}) &\triangleq \mathbf{z}^{\theta^{l+1}}(\mathbf{x})-\mathbf{z}^{\theta^{l}}(\mathbf{x}), \\
   \Delta \log \pi^l(\mathbf{y}\mid \mathbf{x}) &\triangleq \log \pi_{\theta^{l+1}}(\mathbf{y}\mid \mathbf{x}) - \log \pi_{\theta^{l}}(\mathbf{y}\mid \mathbf{x}),
\end{align}
where the model's parameter $\theta$ is updated from step $l$ to $l+1$ by performing one policy gradient update on the sample data $(\mathbf{x},\mathbf{y})$. For simplicity, we primarily analyze the case where $T=1$ (i.e., $\mathbf{y} \in \mathcal{V}$), meaning $\Delta \mathbf{z}^{l} \in \mathbb{R}^{V \times 1}$ and its dimension aligns with the size of the model's vocabulary. Notably, a larger (smaller) logit results in a higher (lower) probability. For $T>1$, the updates can be computed separately; therefore, we can calculate the the distinct $T$ updates and stack them together.

\subsection{A Two-Stage Dynamic: Exploitation and Exploration}\label{main:subsec_pmd}
Given the monotonicity of the Softmax function, the main text focuses mainly on characterizing the changes of \textbf{z}. Analysis of the updates to $\log \pi_{\theta}$ with respect to $\theta$ is provided in the Appendix \ref{app:proof_p1}.

\begin{lemma}[Logits Update for Softmax Parameterization]\label{main:lemma1} Consider a policy parameterized by a Softmax function over logits $\mathbf{z}(\mathbf{x}):=\mathbf{z}=[z_1,\cdots,z_V]^T$, such that the probability of action (or token) $v$ is given by $\pi(v):=\pi(v\mid \mathbf{x})= \text{Softmax}(\mathbf{z})_v=\exp{(z_v)}/\sum^V_{v^{'}}\exp{(z_{v^{'}})}$. Reviewing Eq.(\ref{eq:main2}), \textbf{if the currently sampled action} is $v$, let the policy gradient estimate be $\nabla_{\mathbf{z}} \mathcal{J} \approx \hat{A}(v)\nabla_{\mathbf{z}} \log \pi(v)$. For a learning rate $\eta$, the update rule for the logits at time step $l$ is 
 (Appendix \ref{app:proof_lemma1} for derivation):
\begin{itemize}
    \item For the sampled action $v$:
        \[
        z_v^{l+1} \gets z_v^l + \eta \cdot \hat{A}(v) \cdot \big(1 - \pi^l(v)\big), \quad\Delta z_v^{l} = \eta \cdot \hat{A}(v) \cdot \big(1 - \pi^l(v)\big),
        \]
    \item For all other actions $u \neq v$:
        \[
        z_u^{l+1} \gets z_u^l + \eta \cdot \hat{A}(v) \cdot \big({- \pi^l(u)}\big), \quad\Delta z_u^{l}= \eta \cdot \hat{A}(v) \cdot \big({- \pi^l(u)}\big).
        \]
\end{itemize}
\end{lemma}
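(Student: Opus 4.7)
The plan is to reduce the statement to a direct computation of the gradient of $\log \pi(v)$ with respect to each logit coordinate, and then to apply one step of gradient ascent with step size $\eta$. Since the policy gradient estimate is given as $\nabla_{\mathbf{z}}\mathcal{J} \approx \hat{A}(v)\,\nabla_{\mathbf{z}}\log\pi(v)$ with $v$ the currently sampled action, the advantage factor $\hat{A}(v)$ is a constant with respect to $\mathbf{z}$ and can be pulled outside the differentiation; the whole lemma therefore reduces to evaluating $\partial \log\pi(v)/\partial z_v$ and $\partial \log\pi(v)/\partial z_u$ for $u\neq v$.

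First, I would rewrite the log-probability under the Softmax parameterization in its standard log-sum-exp form,
\[
\log \pi(v) \;=\; z_v \;-\; \log\!\Big(\sum_{v'=1}^{V} \exp(z_{v'})\Big),
\]
which cleanly isolates the contribution of the sampled coordinate $z_v$ from that of the log-partition function $Z(\mathbf{z}):=\log\sum_{v'}\exp(z_{v'})$. Differentiating $Z$ gives $\partial Z/\partial z_u = \exp(z_u)/\sum_{v'}\exp(z_{v'}) = \pi(u)$ for every $u$, which is the only ingredient needed.

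Next, I would carry out the two case distinctions. For the sampled coordinate $u=v$, the $z_v$ term contributes $1$ while the log-partition contributes $-\pi(v)$, yielding $\partial \log\pi(v)/\partial z_v = 1 - \pi(v)$. For any other coordinate $u\neq v$, the $z_v$ term vanishes and only the log-partition contributes, yielding $\partial \log\pi(v)/\partial z_u = -\pi(u)$. Multiplying by $\hat{A}(v)$ then gives the per-coordinate components of $\nabla_{\mathbf{z}}\mathcal{J}$.

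Finally, I would substitute these expressions into the vanilla gradient ascent update $z^{l+1} \gets z^{l} + \eta\, \nabla_{\mathbf{z}}\mathcal{J}$ evaluated at $\theta^{l}$ (so that $\pi = \pi^{l}$), which directly produces the two update rules claimed in the lemma and, upon subtraction, the closed-form expressions for $\Delta z_v^l$ and $\Delta z_u^l$. There is no genuinely hard step here; the only thing to be careful about is to treat the identity of the sampled action $v$ as fixed throughout the differentiation (so that $\hat{A}(v)$ is a scalar multiplier and not itself a function of $\mathbf{z}$), and to keep the indexing consistent between the sampled coordinate and the remaining $V-1$ coordinates when writing the final componentwise update.
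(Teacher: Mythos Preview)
Your proposal is correct and matches the paper's own proof essentially line for line: the paper also writes $\log\pi(v)=z_v-\log\sum_{v'}\exp(z_{v'})$, differentiates to obtain $\partial\log\pi(v)/\partial z_v=1-\pi(v)$ and $\partial\log\pi(v)/\partial z_u=-\pi(u)$ for $u\neq v$, and then plugs these into $\nabla_{\mathbf{z}}\mathcal{J}=\hat{A}(v)\nabla_{\mathbf{z}}\log\pi(v)$ with the gradient-ascent step.
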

\begin{remark}[Bidirectional Update Rule]\label{main:remark2} The update to the logit $\mathbf{z}\in \mathbb{R}^{V\times1}$ depends on both the advantage estimate $\hat{A}$ and the current policy distribution $\pi$. Specifically, Let $v$ denote the currently sampled action. (1) when $\hat{A}(v) > 0$: $z_v$ increases by $\eta\hat{A}(v)(1-\pi(v))$ while $z_u$ ($u \neq v$) decreases by $\eta\hat{A}(v)\pi(u)$; (2) when $\hat{A}(v) < 0$: $z_v$ decreases by $\eta|\hat{A}(v)|(1-\pi(v))$ while $z_u$ ($u \neq v$) increases by $\eta|\hat{A}(v)|\pi(u)$. The normalization property of Softmax ensures that when $\hat{A}(v) > 0$, the update increases $\pi(v)$ while decreasing $\pi(u)$ for all $u \neq v$, including other advantageous actions. In contrast, when $\hat{A}(v) < 0$, the update increases the probabilities of other actions proportionally to their current policy values. The update may reallocate probability mass toward other potentially advantageous actions that were previously under-sampled.
\end{remark}
The practical update in group policy optimization (e.g. GRPO, DAPO, GSPO, REINFORCE++ \citep{hu2025reinforceefficientrlhfalgorithm}, GPG \citep{gpg}, GPO \citep{gpo}), which employs Monte Carlo sampling, arises from the collective effect of a group of responses, thus motivating our analysis of the \textbf{expected logits update}.
\begin{theorem}[The Expected Logits Update]\label{main:proposition2} Under the conditions stated in Lemma \ref{main:lemma1}, we assume\footnote{Without loss of generality, we approximate importance ratio $w\approx1$, as regularization components such as the KL penalty and clipping operation are applied in practical training.} that $\mathbf{x}  \sim \mathcal{D}$ is i.i.d. and $\{u_i\}_{i=1}^G$ are randomly sampled from $\pi(\cdot\mid \mathbf{x})$, the expected group relative policy gradient $\nabla_{\mathbf{z}} \mathcal{J} \in \mathbb{R}^{V\times1}$ is $\mathbb{E}_{\mathbf{x}\sim \mathcal{D},\{u_i\}_{i=1}^G \sim \pi(\cdot\mid \mathbf{x})} \left[\frac{1}{G}\sum_{i=1}^G\hat{A}(u_i) \nabla_{\mathbf{z}} \log \pi(u_i)\right]$. Then the expected logits update is (proof in Appendix \ref{app:proof_p2}):
\begin{align*}
    \mathbb{E}(\Delta z_v^l) =\eta\cdot\pi^l(v)\left[(1-\pi^l(v))\hat{A}(v)-\sum_{u\neq v}\pi^l(u)\hat{A}(u)\right].
\end{align*}
\end{theorem}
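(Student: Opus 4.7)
\textbf{Proof plan for Theorem~\ref{main:proposition2}.} The plan is to start from Lemma~\ref{main:lemma1}, unify its two cases into a single per-sample update, and then take the expectation over a group of i.i.d.\ samples from $\pi(\cdot\mid\mathbf{x})$. Since one gradient-ascent step changes logits linearly in the gradient, the expected update is simply the expectation of the per-sample update averaged over the group.

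First, I would rewrite the single-sample update rule of Lemma~\ref{main:lemma1} compactly: conditional on sampling action $u$, the change in logit $v$ can be written as $\Delta z_v(u) = \eta\,\hat{A}(u)\bigl(\mathbf{1}\{v=u\}-\pi^{l}(v)\bigr)$, because $1-\pi^{l}(v)=\mathbf{1}\{v=u\}-\pi^{l}(v)$ when $v=u$ and $-\pi^{l}(v)=\mathbf{1}\{v=u\}-\pi^{l}(v)$ when $v\neq u$. For a group of $G$ samples the update is the average of the per-sample contributions, so $\Delta z_v^{l}=\frac{\eta}{G}\sum_{i=1}^{G}\hat{A}(u_i)\bigl(\mathbf{1}\{v=u_i\}-\pi^{l}(v)\bigr)$.

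Second, I would take the expectation over $\{u_i\}_{i=1}^{G}\sim\pi(\cdot\mid\mathbf{x})$. Because the $u_i$ are i.i.d., each summand shares the same expectation and the factor $1/G$ cancels with the $G$ equal terms, leaving $\mathbb{E}[\Delta z_v^{l}]=\eta\,\mathbb{E}_{u\sim\pi}\!\bigl[\hat{A}(u)(\mathbf{1}\{v=u\}-\pi^{l}(v))\bigr]$. I would then evaluate the two pieces separately: the indicator piece gives $\mathbb{E}_{u\sim\pi}[\hat{A}(u)\mathbf{1}\{v=u\}]=\pi^{l}(v)\hat{A}(v)$, while the second piece factors out the deterministic $\pi^{l}(v)$ to give $\pi^{l}(v)\sum_{u}\pi^{l}(u)\hat{A}(u)$.

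Finally, I would combine these and pull $v$ out of the inner sum: $\mathbb{E}[\Delta z_v^{l}]=\eta\,\pi^{l}(v)\bigl[\hat{A}(v)-\sum_{u}\pi^{l}(u)\hat{A}(u)\bigr]=\eta\,\pi^{l}(v)\bigl[(1-\pi^{l}(v))\hat{A}(v)-\sum_{u\neq v}\pi^{l}(u)\hat{A}(u)\bigr]$, which is the claimed identity. The outer expectation over $\mathbf{x}\sim\mathcal{D}$ is then immediate since the identity holds pointwise in $\mathbf{x}$ (with $\pi^{l}$ and $\hat{A}$ implicitly conditioned on $\mathbf{x}$). I do not anticipate a real obstacle here; the only delicate points are the unified rewriting of Lemma~\ref{main:lemma1} via the indicator and the justification that the approximation $w\approx 1$ (stated in the footnote) is what lets us treat the averaged per-sample gradient as the actual logit update, so I would flag these explicitly rather than let them pass silently.
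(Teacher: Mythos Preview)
Your proposal is correct and follows essentially the same route as the paper: reduce the group average to a single-sample expectation via i.i.d.\ sampling, then apply Lemma~\ref{main:lemma1} to the softmax score. Your indicator rewriting $\Delta z_v(u)=\eta\,\hat{A}(u)(\mathbf{1}\{v=u\}-\pi^l(v))$ is a clean way to unify the two cases and makes the final algebra more transparent than the paper's terse ``apply Lemma~\ref{main:lemma1}'' step, but the underlying argument is identical.
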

\begin{figure}[htb]
  \centering
    \begin{subfigure}[b]{0.245\textwidth}
        \centering
        \includegraphics[height=3.45cm]{   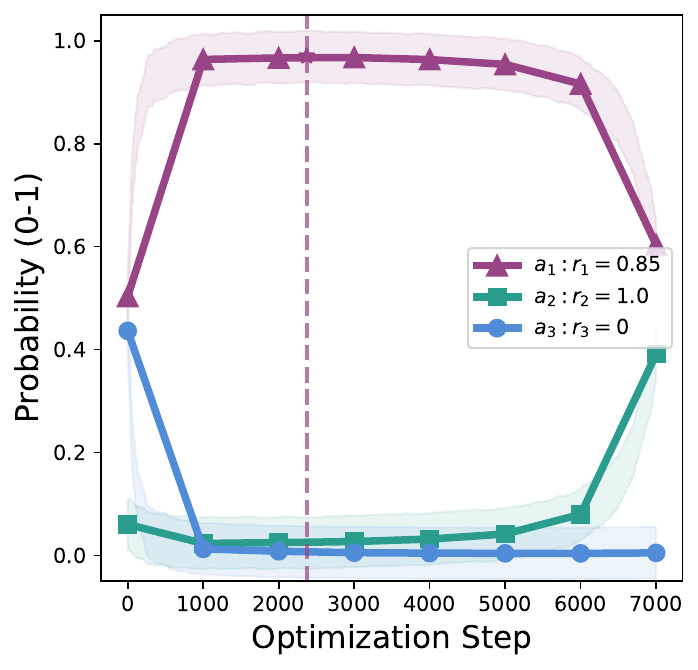}
    \end{subfigure}
    \begin{subfigure}[b]{0.245\textwidth}
        \centering
        \includegraphics[height=3.45cm]{   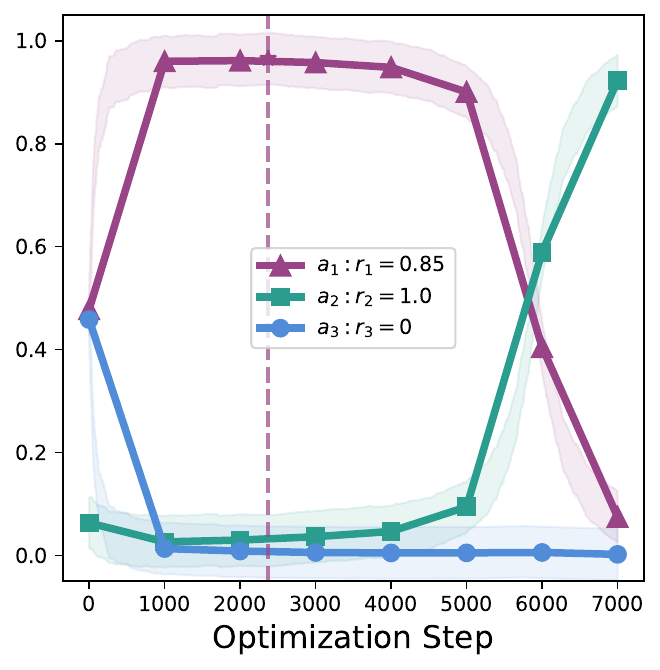}
    \end{subfigure}
    \begin{subfigure}[b]{0.245\textwidth}
        \centering
        \includegraphics[height=3.45cm]{   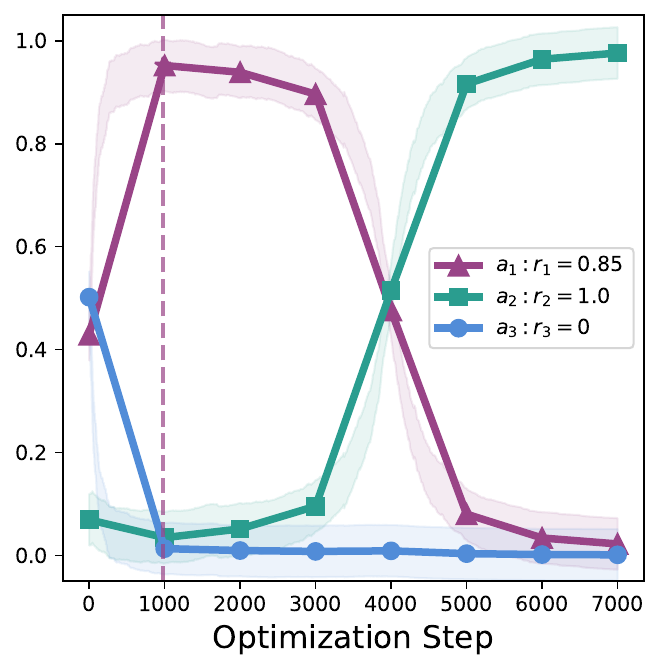}
    \end{subfigure}
    \begin{subfigure}[b]{0.245\textwidth}
        \centering
        \includegraphics[height=3.45cm]{   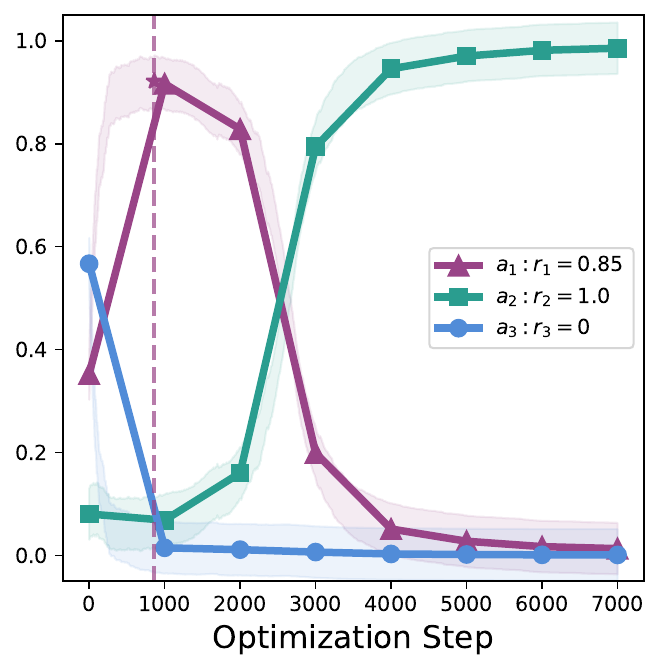}
    \end{subfigure} 
    \begin{subfigure}[b]{0.245\textwidth}
        \centering
        \includegraphics[height=3.45cm]{   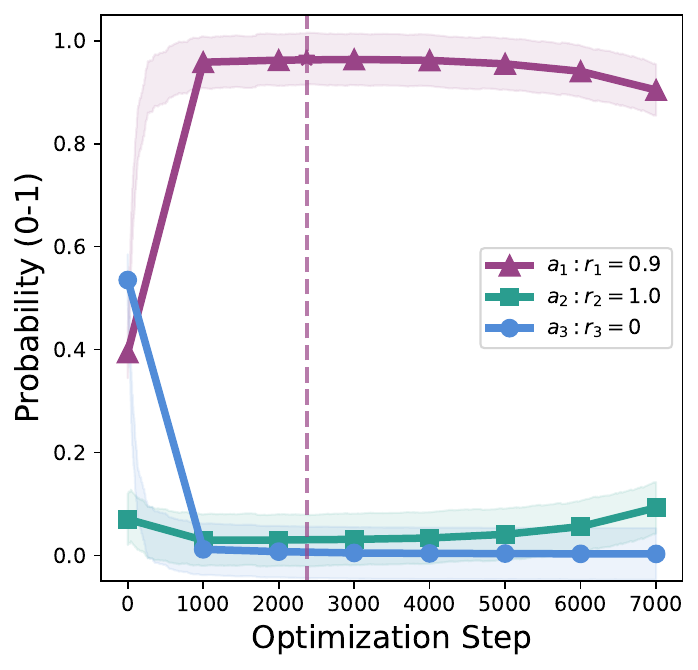}
    \end{subfigure}
    \begin{subfigure}[b]{0.245\textwidth}
        \centering
        \includegraphics[height=3.45cm]{   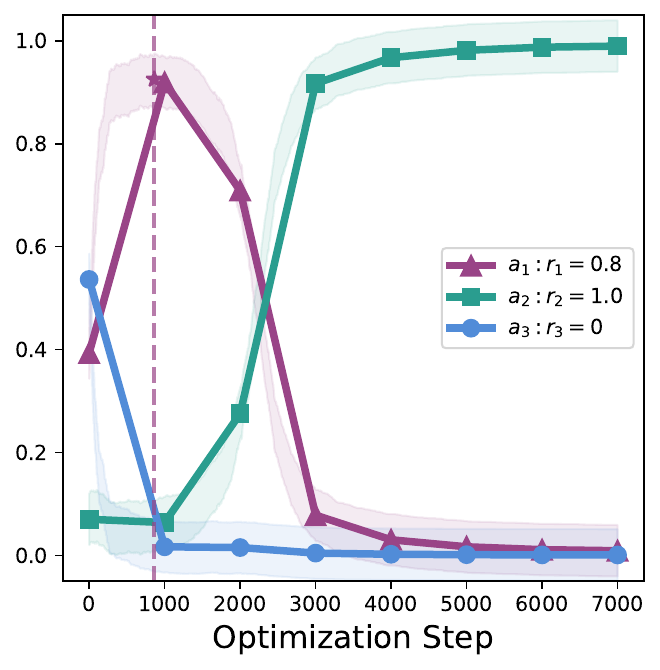}
    \end{subfigure}
    \begin{subfigure}[b]{0.245\textwidth}
        \centering
        \includegraphics[height=3.45cm]{   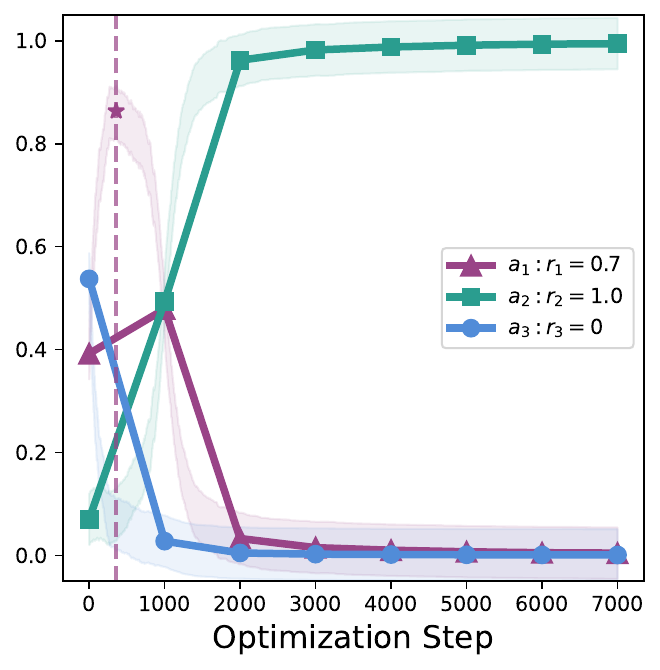}
    \end{subfigure}
    \begin{subfigure}[b]{0.245\textwidth}
        \centering
        \includegraphics[height=3.45cm]{   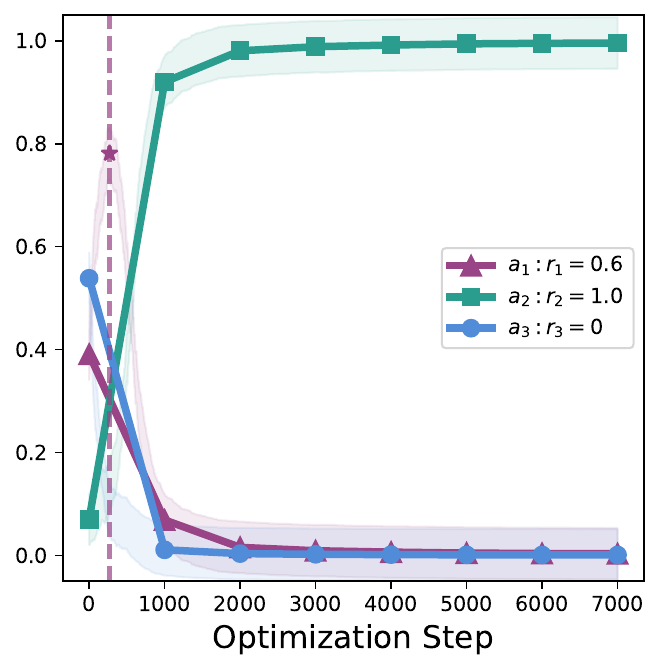}
    \end{subfigure} 
  \caption{The probability mass dynamics of policy optimization across varying action rewards $r$ and initial policy probabilities $\pi$.  Each sub-figure corresponds only to the indicated rewards and probabilities. The first row compares the impact of different initial policy probabilities under identical rewards, while the second row compares the effect of varying rewards given the same initial policy.}
  \label{main:figure1}
\end{figure}
\begin{remark}[A Two-Stage Dynamic of Exploitation and Exploration]\label{main:remark3} Theorem \ref{main:proposition2} establishes that the magnitude of the expected logit update $\Delta z_v^l$ is explicitly governed by $\pi^l(v)$. Although the Softmax function guarantees strictly positive probabilities for all actions, a significant number of these actions lie within the extreme tail of the probability distribution. As a result, under finite-sample training conditions, such actions exert negligible influence on parameter updates and are effectively omitted during optimization (that is, $\pi^l(v)\rightarrow 0$ leads to $\Delta z_v^l \rightarrow 0$). 
Interestingly, the overall dynamic appears to unfold in two distinct stages. (1) \textbf{Exploitation stage, corresponding to capability boundary shrinkage}: initially, the model mainly samples already-explored high-reward and low-reward tokens, rarely selecting the potentially optimal one. Driven by positive advantage estimates, the probability of the high-reward token increases while the low-reward token decreases. The potentially optimal token’s probability remains largely unchanged (or may even decrease, Remark \ref{main:remark2}), suggesting that over-exploitation in this stage may cause capability boundary shrinkage. (2) \textbf{Exploration stage, corresponding to capability boundary expansion}: As training continues, the growth of the previously dominant high-reward token slows as it approaches saturation ($1 - \pi \rightarrow 0$). When the potentially optimal token—now receiving positive advantage signals—is occasionally sampled, its probability rises, while that of the former high-reward token decreases. A key feature of this stage is the shift in relative negative sampling: from the initial low-reward token to the once high-reward token. This implies that through prolonged training, gradient updates can be shifted toward tokens with low initial probability but high reward, once high-probability tokens have stabilized. For instance, under the $\text{Pass@}k$ metric, raising the probability of at least one correct action above $1/k$ corresponds to an expansion of the reasoning capability boundary.
\end{remark}

\subsection{Demonstration with A Toy Example}\label{main:toy}
Next, to more clearly demonstrate the theoretically predicted two-stage dynamic, we validate the above analysis of probability mass dynamics using a simple toy setting, and subsequently review several widely adopted RLVR tricks and more than three actions case in Appendix \ref{app:discussion tricks}.

$\diamond$ \textbf{Starting with a toy setting}. To better track the probability mass dynamics, we analyze the scenario in a clean and simplified setting, assuming the entire action space consists of only three actions\footnote{In Section \ref{main:sec_pre}, the reward $r$ is sparse (1 or 0). However, due to the presence of factors such as the importance ratio $w$, distinctions arise among positive rewards ($wr$). For brevity of analysis, we ignore the standard deviation $\text{std}(\{r(a_j)\}_{j=1}^G)$ because it does not affect the sign (positive or negative nature) of $\hat{A}(a_i)$.}: $a_1$, with $r(a_1) > 0$, which has been explored; $a_2$, with $r(a_2) > 0$, which remains unexplored; and $a_3$, with $r(a_3) = 0$, which has been explored. Let the initial logits be denoted as $\mathbf{z}=[z(a_1),z(a_2),z(a_3)]^T$, and the policy as $\pi(a_i)=\exp{(z(a_i))}/\sum_{j\neq i}\exp{(z(a_j))}, \forall i\in[1,2,3]$. Here, we perform $G$ action samplings, estimate the relative advantages via $\hat{A}(a_i) = r(a_i) - \text{mean}(\{r(a_j)\}_{j=1}^G)$, and subsequently update the logits $\mathbf{z}$ using the policy gradient update rule given in Theorem \ref{main:proposition2}. As stated in Remark \ref{main:remark3}, we discuss the following scenario\footnote{Otherwise, we can proceed with normal optimization to increase the probability of the optimal action.}: RLVR reinforces high-probability yet suboptimal actions while overlooking potentially optimal correct actions that initially have low probability, thereby leading to over-exploitation behavior. That is, $r(a_1)<r(a_2)$, while initially $\pi(a_1)>\pi(a_2)$. For implementation details, see Algorithm \ref{alg1}.

\textbf{Results for demonstration}. We sample actions at each optimization step (with $G=2,\eta=0.1$) and analyze the probability mass dynamics (a larger $G$ leads to more stable optimization and does not affect our main findings and conclusions). Figure \ref{main:figure1} clearly illustrates the dynamics of the probability mass of policy optimization across different rewards $r$ \& initial policy probabilities $\pi$, which aligns with the theoretical analysis in Section \ref{main:subsec_pmd}. That is, $\mathbb{E}(\Delta z(a_i))=\eta\pi(a_i)\left[(1-\pi(a_i))\hat{A}(a_i)-\sum_{j\neq i}^3\pi(a_j)\hat{A}(a_j)\right]$. More specifically, the overall dynamics can be divided into two stages: (1) Initially, $\pi(a_1)$ and $\pi(a_3)$ are relatively large while $\pi(a_2)$ is comparatively small. Since actions $a_1$ and $a_3$ are predominantly sampled, and given that $\hat{A}(a_1)>0$ and $\hat{A}(a_3)<0$, $\pi(a_1)$ increases while $\pi(a_3)$ decreases. Meanwhile, $\pi(a_2)$ remains almost unchanged. (2) As $\pi(a_1)$ increases, the gradient term $1-\pi(a_1)$ gradually approaches zero, causing the growth of $\pi(a_1)$ to stabilize. If training continues beyond this point, when action $a_2$ is sampled with $\hat{A}(a_2)>0$ and $\hat{A}(a_1)<0$, $\pi(a_1)$ will decrease while $\pi(a_2)$ increases. Note that throughout the optimization process, the relative negative actions change (initially $a_3$ and later $a_1$).

\begin{remark}
From the two-stage dynamics, (1) it can be observed that although the relative policy gradient method does exhibit the phenomenon of capability boundary
shrinkage. However, prolonging the duration of the training may result in further gradient updates being applied to low-probability action sequences once the high-probability ones have reached convergence. This is precisely why the research represented by \citet{cui2025entropy} employs entropy control mechanisms to extend the duration of training. (2) More interestingly, the relative policy gradient may undergo changes during the training process: $\pi(a_1)$ first increases and then decreases. Therefore, simply using the momentum of policy gradients from the early stages of updates—as in methods like AAPO \citep{xiong2025aapo}—to enhance policy optimization is suboptimal. In contrast, approaches such as ProRL \citep{liu2025prorl,liu2025scaling} periodically \textbf{reset the reference policy and optimizer states} during training.
\end{remark}

\section{How to Prolong Training: Revisiting the Role of Relative Negative Gradients}\label{main:-n}
Thus far, we have established the imperative of avoiding over-sharpening in the policy distribution—which induces over-exploitation and entropy collapse—and of enabling sustained training. \citet{liu2025prorl} identify a fundamental limitation across existing studies \citep{yue2025does,dang2025assessing,zhao2025echo}: RL training is frequently terminated prematurely after only a few hundred steps, hindering the models' ability to fully explore and acquire novel reasoning capabilities. Their conclusions align closely with our findings. Therefore, enhancing training stability and facilitating extended training durations constitute promising directions for future research.

In group policy optimization (e.g. GRPO, GSPO), the policy $\pi_{\theta}(\mathbf{y}\mid \mathbf{x})=\prod_{t=1}^{|\mathbf{y}|} \pi_{\theta}(y_t \mid \mathbf{x}, \mathbf{y}_{<t})$ learned by the model is inherently complex. Returning to Section \ref{main:sec_pmdte}, policy updates can be interpreted as dynamically redistributing probability mass across the search tree, which has a size of $\mathcal{O}(V^T)$. To unlock the model's capacity for genuinely novel reasoning, we call for research into strategies that more effectively allocate probability mass. Based on the probability mass dynamics established in Lemma \ref{main:lemma1} and Theorem \ref{main:proposition2}, optimizing relative negative advantage actions implicitly increases the probability of other actions. A straightforward strategy is to allocate policy probability mass exclusively through relative negative gradients within the overall dynamics. In this part, we will revisit the role of only using relative negative gradients in prolonging training.

\subsection{Experimental Setup}\label{main:subsetup} We choose \texttt{Qwen2.5-Math-7B} \citep{yang2024qwen2} and \texttt{Llama-3.2-3B-Instruct} \citep{touvron2023llama}  as our base models for investigation, which align with our hardware resource. \textbf{For RLVR algorithms}, we evaluate the standard approach alongside a variant that employs exclusively relative negative gradients\footnote{The relative advantage $\hat{A}_{i,t}$ is computed over a group of responses. When $\hat{A}_{i,t}<0$, the term $w_{i,t}(\theta)\hat{A}_{i,t}\nabla_{\theta} \log \pi_{\theta}(y_{i,t} \mid \mathbf{x}, \mathbf{y}_{i,<t})$ is referred to as a relative negative gradient. See Appendix \ref{app:details_n} for the \textbf{details of relative negative gradients}.} (denoted as \textbf{-N}). This comparison includes widely-used methods such as GRPO \citep{guo2025deepseek,deepseekmath} and GSPO \citep{gspo}. Moreover, we use the verl framework \citep{sheng2025hybridflow} to train the models and the detailed hyperparameter settings of training and evaluation can be found in Appendix \ref{app:sub_settings}. \textbf{For the datasets}, we employ the training set of MATH \citep{hendrycks2021measuring}, which comprises 7,500 problems, for model training (with prompt batch size of 1,024). Performance is evaluated on widely-used reasoning benchmarks, (1) in-domain (ID) tasks: the test sets of MATH, AIME 2024, AIME 2025, and AMC 2023. (2) out-of-domain (OOD) tasks: ARC-c \citep{arc} (open-domain reasoning), MMLU-Pro \citep{mmlu} (academic reasoning).

Specifically, we adopt $\text{Pass@}k$ as our primary evaluation metric, which measures whether a model can successfully solve a problem within $k$ attempts. This metric has been widely used to mitigate the unreliability of greedy decoding-based accuracy estimates \citep{hochlehnert2025sober} and to better assess the true capability boundaries of models \citep{chen2021evaluating,chen2025pass,yue2025does,zhu2025surprising}. The unbiased estimator first generates the $n$ responses for per question $\mathbf{x}$ ($n\geq k$),  counts the number of correct responses $c$, then computes the metric as:
\begin{align*}
    \text{Pass@}k=\mathbb{E}_{\mathbf{x}\sim \mathcal{D}}\left[1-\frac{\binom{n-c}{k}}{\binom{n}{k}}\right].
\end{align*}

\subsection{Training Dynamics and Evaluation Results}\label{main:sec td_er}
\begin{figure}[htb]
  \centering
    \begin{subfigure}[b]{0.2445\textwidth}
        \centering
        \includegraphics[height=3.42cm]{   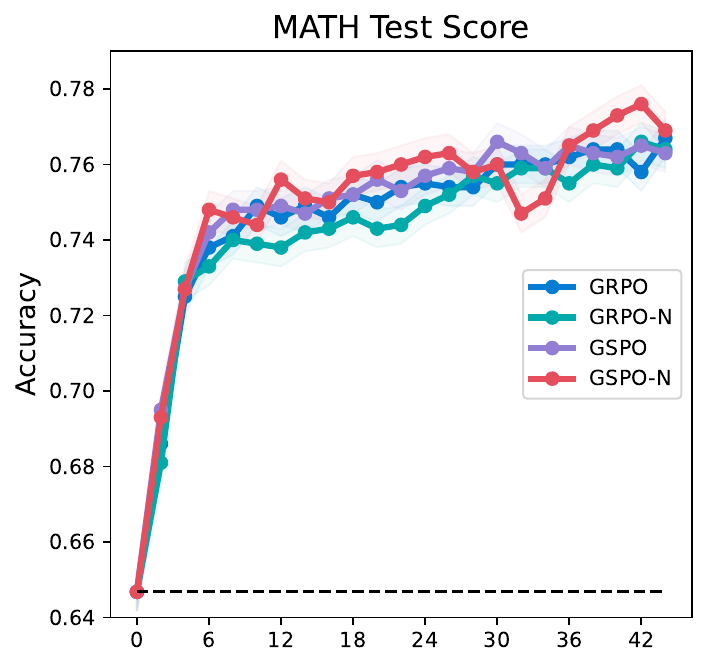}
    \end{subfigure}
    \begin{subfigure}[b]{0.2445\textwidth}
        \centering
        \includegraphics[height=3.42cm]{   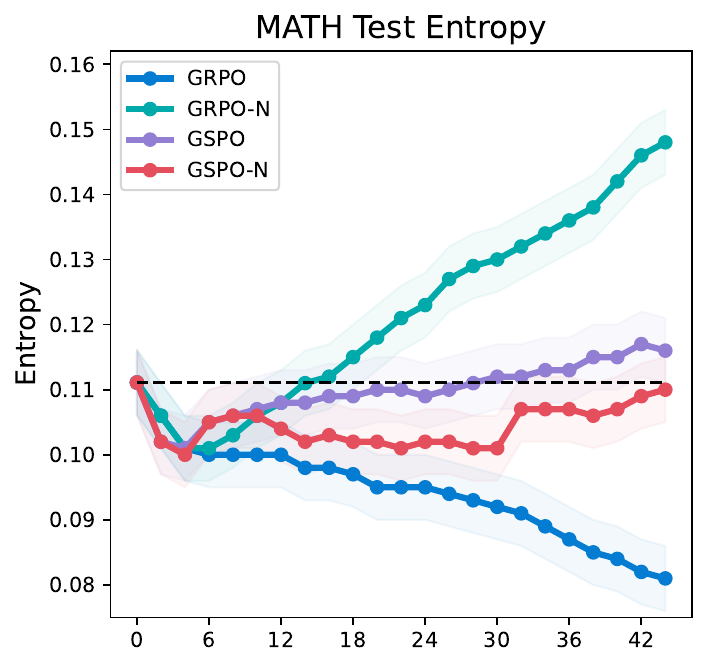}
    \end{subfigure}
    \begin{subfigure}[b]{0.2445\textwidth}
        \centering
        \includegraphics[height=3.42cm]{   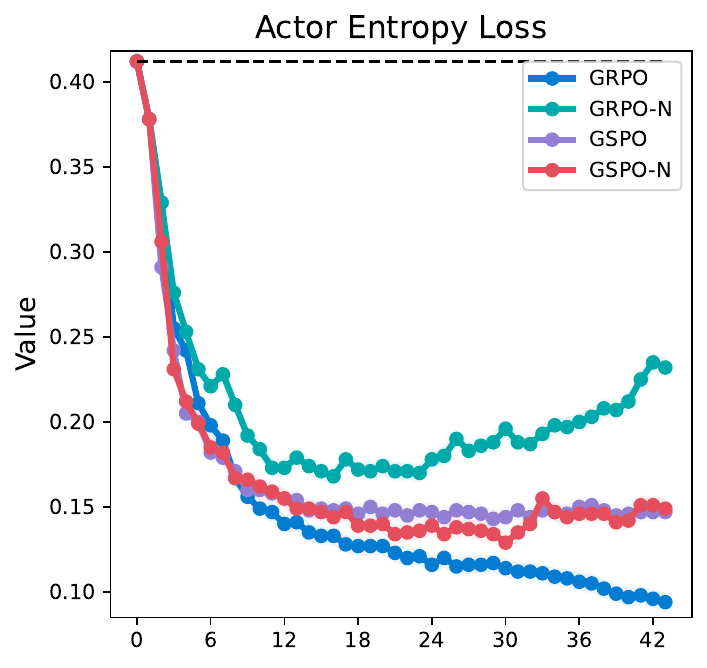}
    \end{subfigure}
    \begin{subfigure}[b]{0.2445\textwidth}
        \centering
        \includegraphics[height=3.42cm]{   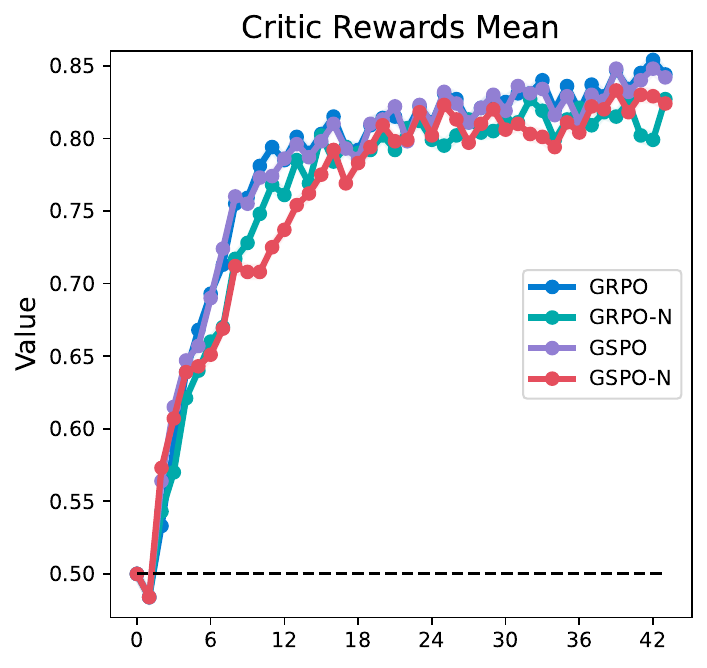}
    \end{subfigure} 
  \caption{Comparison of the training dynamics of GRPO, GRPO-N, GSPO, and GSPO-N on the MATH benchmark across training steps, using the \texttt{Qwen2.5-Math-7B} model with a prompt batch size of 1,024. \textbf{Left Part:} (Left) the greedy decoding accuracy on the MATH test set and (Center Left) the model's entropy on the MATH test set. \textbf{Right Part:} (Center Right) the actor entropy loss and (Right) critic rewards mean during training. GRPO causes the entropy of the base model to collapse over the course of training, suggesting a loss of exploratory capability. In contrast, GRPO-N, GSPO, and GSPO-N all exhibit a pattern where entropy initially decreases and then increases. Notably, the entropy of GRPO-N significantly surpasses that of the base model. All algorithms achieve competitive performance in both greedy decoding accuracy and critic rewards mean.}
  \label{main:figure2}
\end{figure}
\textbf{Training dynamics}. We characterize the training dynamics by monitoring the greedy decoding accuracy and entropy on a held-out MATH test set over the course of training (Figure \ref{main:figure2} for \texttt{Qwen2.5-Math-7B}, Figure \ref{app:figure3} for \texttt{Llama-3.2-3B-Instruct}), together with the actor entropy loss and critic rewards mean during training. As illustrated, GRPO, GRPO-N, GSPO, and GSPO consistently achieve competitive performance in both greedy decoding accuracy and critic rewards mean. Notably, GRPO leads to a rapid and substantial decline in entropy on the MATH test set. In contrast, GRPO-N, GSPO, and GSPO-N all show an initial decrease in entropy, followed by a consistent increase. Importantly, the entropy on the held-out test set under GRPO-N significantly exceeds that of the base model. This divergence indicates that the standard GRPO may limit output diversity and exploratory capability (see Table \ref{table:math_aime_pass_k}), both methods that apply sequence-level importance ratio clipping directly (GSPO and GSPO-N) and those that utilize only relative negative gradients (GRPO-N) help mitigate overconfidence in previously sampled responses. Of particular significance, prior study \citep{cui2025entropy} suggests that policy performance comes at the cost of policy entropy, and is therefore bottlenecked by its exhaustion. Therefore, the model optimized by GRPO-N may be a good baseline and maintain the base model's diversity for prolonging training\footnote{For model with weak prior, we provide the training dynamics and evaluation results of \texttt{LLama-3.2-3B-Instruct} in Appendix \ref{app:more evaluation}. The performance ceiling is related to the base model, yet the key finding remains consistent across different models.}.

\textbf{Performance on ID\&OOD tasks}. As shown in Table \ref{table:math_aime_pass_k}, for model with strong prior (e.g., Qwen models), both GRPO-N and GSPO-N consistently achieve a favorable trade-off across various values of $k$ on both ID tasks (e.g., AMC 2023, AIME 2024, and AIME 2025) and OOD tasks (such as ARC-c and MMLU-Pro). In particular, (1) GSPO-N matches the best Pass@$1$ performance on AMC 2023, AIME 2025, ARC-c and MMLU-Pro. (2) GRPO-N and GSPO-N reliably improve the reasoning performance of the base model on ID tasks for every value of $k$. (3) For
OOD tasks, GRPO-N (GSPO-N) achieves higher Pass@$k$ scores than GRPO (GSPO) across all $k$ values, demonstrating stable performance improvements while largely preserving the diversity of the base model. 

\begin{table*}[ht]
\centering
\setlength{\tabcolsep}{8pt}
\small
\caption{Evaluation results of \texttt{Qwen2.5-Math-7B} on in-domain tasks (AMC 2023, AIME 2024, and AIME 2025) and out-of-domain tasks (ARC-c and  MMLU-Pro). For each $k$, \textbf{bold} and \underline{underlined} numbers indicate the best and second-best results, respectively.}
\label{table:math_aime_pass_k}
\begin{tabular}{lccccccccc}
\toprule
\textbf{Algorithm} & \multicolumn{9}{c}{\textbf{Pass@$k$}} \\
$k$ & 1 & 2 & 4 & 8 & 16 & 32 & 64 & 128 & 256 \\
\midrule
\textbf{} & \multicolumn{9}{c}{\textbf{AMC 2023}} \\
Base Model & 40.4 & 55.6 & 69.1 & 79.4 & 85.9 & 89.5 & 92.1 & 94.6 & 97.5 \\
GRPO & 60.4 & 69.9 & 77.4 & 82.9 & 86.7 & 89.4 & 91.7 & 94.7 & 97.5 \\
\rowcolor{gray!10} GRPO-N & 59.2 & 68.7 & 76.3 & 82.7 & 87.6 & \textbf{92.3} & \textbf{96.3} & \textbf{99.1} & \textbf{100.0} \\
GSPO & \underline{61.1} & \underline{70.5} & \underline{78.0} & \underline{83.9} & \underline{88.1} & 91.6 & 94.4 & 96.2 & 97.5 \\
\rowcolor{gray!10} GSPO-N & \textbf{61.5} & \textbf{71.2} & \textbf{78.5} & \textbf{84.1} & \textbf{88.4} & \underline{91.8} & \underline{94.8} & \underline{97.4} & \underline{100.0} \\
\midrule
\textbf{} & \multicolumn{9}{c}{\textbf{AIME 2024}} \\
Base Model & 13.6 & 21.8 & 30.5 & 37.5 & 43.5 & 49.7 & 55.8 & 61.4 & 66.7 \\
GRPO & 22.6 & 31.5 & 39.5 & 46.2 & 51.9 & 57.3 & 62.9 & \underline{68.9} & 73.3 \\
\rowcolor{gray!10} GRPO-N & \underline{23.6} & \underline{33.4} & 41.8 & 47.5 & 51.9 & 56.7 & 61.8 & 67.3 & \underline{73.3} \\
GSPO & \textbf{25.3} & \textbf{34.7} & \textbf{42.4} & \underline{48.3} & \underline{53.6} & \underline{58.7} & \underline{63.6} & 68.1 & 73.3 \\
\rowcolor{gray!10} GSPO-N & 23.3 & 31.1 & \underline{42.1} & \textbf{48.8} & \textbf{54.3} & \textbf{59.4} & \textbf{64.4} & \textbf{69.6} & \textbf{73.3} \\
\midrule
\textbf{} & \multicolumn{9}{c}{\textbf{AIME 2025}} \\
Base Model & 6.4 & 10.2 & 14.5 & 18.9 & 23.6 & 28.1 & 32.5 & 38.3 & 46.7 \\
GRPO & 9.2 & 13.4 & 17.9 & 22.4 & 26.4 & 29.9 & 33.9 & 39.1 & 46.7 \\
\rowcolor{gray!10} GRPO-N & 9.5 & 14.2 & 19.1 & 23.8 & 28.7 & \underline{34.2} & \textbf{41.6} & \textbf{52.5} & \textbf{66.7} \\
GSPO & \underline{9.6} & \underline{14.3} & \underline{19.2} & \underline{23.9} & \underline{28.8} & 34.2 & 40.9 & \underline{49.5} & \underline{60.0} \\
\rowcolor{gray!10} GSPO-N & \textbf{10.2} & \textbf{14.7} & \textbf{19.6} & \textbf{24.9} & \textbf{29.9} & \textbf{35.0} & \underline{40.9} & 47.3 & 53.3 \\
\midrule
\textbf{} & \multicolumn{9}{c}{\textbf{ARC-c}} \\
Base Model & 35.4 & 54.9 & 73.7 & 86.5 & 93.6 & \underline{96.9} & 98.2 & 99.2 & 100.0 \\
GRPO & \underline{62.3} & 77.4 & \underline{86.6} & \underline{91.6} & \underline{94.2} & 96.3 & \underline{98.2} & \underline{99.5} & \underline{100.0} \\
\rowcolor{gray!10} GRPO-N & 61.7 & \textbf{78.1} & \textbf{88.5} & \textbf{94.3} & \textbf{97.7} & \textbf{99.5} & \textbf{99.9} & \textbf{100.0} & \textbf{100.0} \\
GSPO & 59.9 & 74.1 & 83.2 & 89.1 & 93.2 & 95.7 & 96.7 & 96.9 & 96.9 \\
\rowcolor{gray!10} GSPO-N & \textbf{63.9} & \underline{77.9} & 86.5 & 91.1 & 93.7 & 95.5 & 96.5 & 96.9 & 96.9 \\
\midrule
\textbf{} & \multicolumn{9}{c}{\textbf{MMLU-Pro}} \\
Base Model & 28.1 & 41.4 & 55.1 & 67.6 & \underline{78.1} & \textbf{85.9} & \textbf{91.6} & \textbf{96.2} & \textbf{100.0} \\
GRPO & \underline{40.1} & \underline{52.0} & \textbf{62.4} & \textbf{70.8} & 76.9 & 80.6 & 83.7 & 87.3 & 90.6 \\
    \rowcolor{gray!10} GRPO-N & 38.5 & 49.9 & 60.7 & \underline{70.1} & \textbf{78.1} & \underline{84.3} & \underline{89.0} & \underline{93.7} & \underline{100.0} \\
GSPO & 40.0 & 50.6 & 60.6 & 69.2 & 75.5 & 79.5 & 82.4 & 85.9 & 90.6 \\
\rowcolor{gray!10} GSPO-N & \textbf{41.6} & \textbf{52.3} & \underline{61.8} & 69.7 & 75.9 & 79.9 & 83.6 & 88.1 & 93.8 \\
\bottomrule
\end{tabular}
\end{table*}

In this subsection, we demonstrate that while all algorithms achieve competitive performance, they exhibit distinct behaviors at the entropy level. This observation motivates further analysis of the \textbf{differences among various correct responses} in the next part.
\subsection{Analysis of Different Correct Responses}
\begin{figure}[ht]
        \centering
        \includegraphics[width=\textwidth]{ 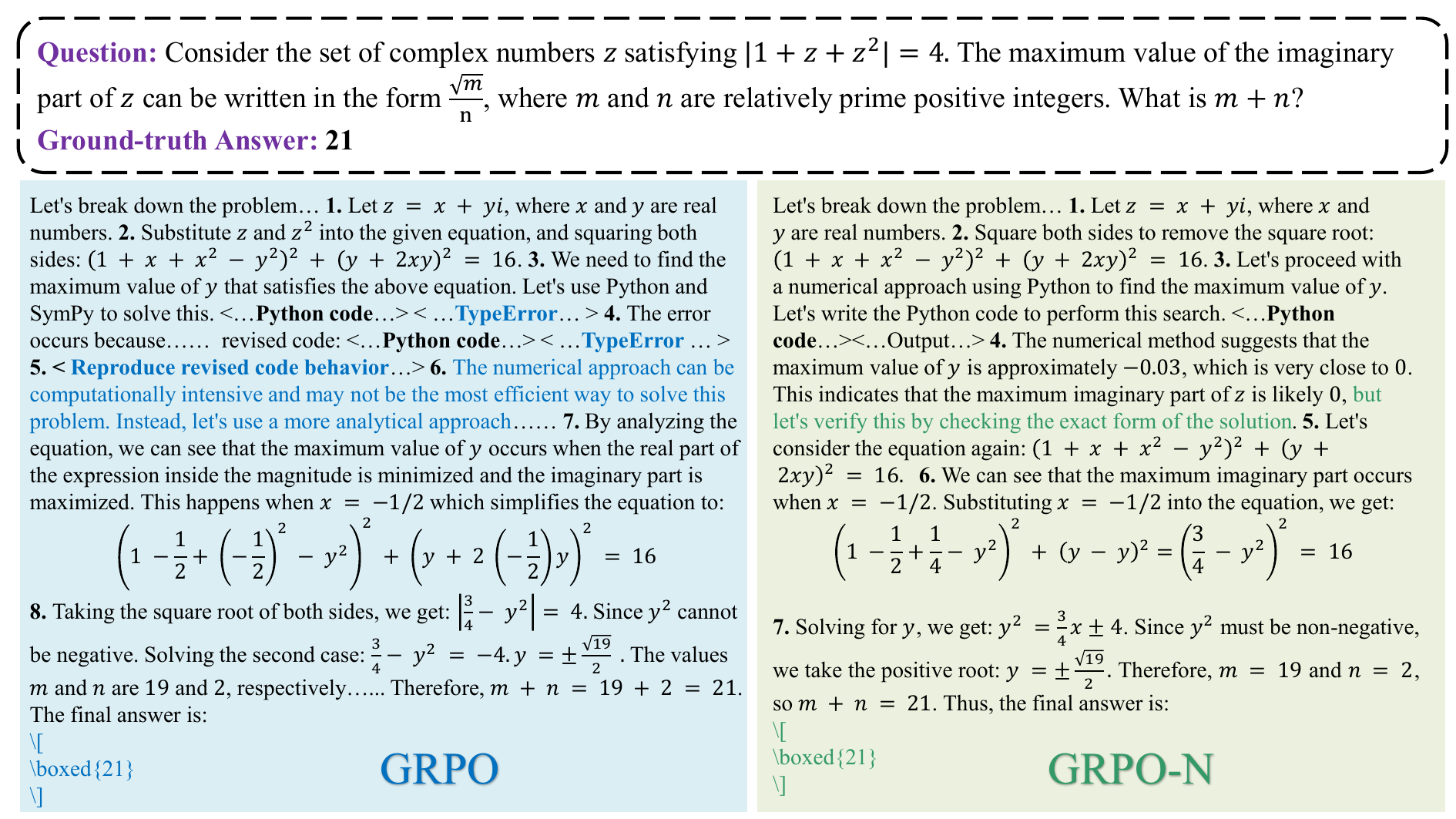}
        \caption{A comparison of the correct responses of GRPO and GRPO-N (a test case from AMC 2023). The key reasoning steps are presented here, see Appendix \ref{app:case} for full procedure.}
        \label{main:figure_case}
\end{figure}

\textbf{Case study}. We present the reasoning processes of GRPO and GRPO-N, as illustrated in Figure \ref{main:figure_case}. Although various responses can yield correct answers through reasoning, their underlying reasoning behaviors differ markedly. Specifically, GRPO tends to repeat similar errors, such as persistently generating code with TypeErrors, and fails to rectify them. In contrast, GRPO-N produces fewer erroneous codes than GRPO. We further examined the responses generated during the training process and found instances where incorrect code was initially produced but later reflected upon and corrected to form the final correct response. GRPO assigns higher probability to the entire trajectory (i.e., the behavior of generating incorrect code is reinforced—a tendency that may significantly affect the model's self-correction ability), whereas GRPO-N mitigates this issue. This necessitates an inquiry into achieving finer-grained control of probability assignments.

\section{Conclusion and Discussion}
\textbf{Conclusion}. Based on a two-stage dynamic view of probability mass allocation, this study resolves the ongoing debate on whether RLVR shrinks or expands LLM reasoning capabilities. We show that initial training favors exploitation, potentially narrowing capability boundaries, while prolonged training encourages exploration, enabling genuine expansion. Theoretically and empirically, we demonstrate that both phenomena occur at different phases. Guided by these findings, one can develop new algorithms to foster more advanced reasoning capabilities. 

\textbf{Discussion}. However, further studies are required on (i) how to design efficient algorithms for fine-grained probability mass allocation; (ii) what kind of base models are more conducive to capability boundary expansion during the RL stage; and (iii) where the ceiling of boundary exploration lies.  We leave these questions for our future work.


\bibliographystyle{plainnat}
\bibliography{reference}

\clearpage
\appendix

\section{More Related Works}\label{app:works}
Here, we discuss more related works to supplement the main text.

\textbf{Reinforcement Learning for LLM Reasoning}. Large language models (LLMs) are often post-trained using reinforcement learning (RL), both for preference alignment \citep{ouyang2022traininglanguagemodelsfollow,bai2022training} and to improve reasoning capabilities \citep{deepseekmath,guo2025deepseek}. Inspired by \citet{deepseekmath, liu2025intro} and \citet{swamy2025all}, this work reformulates methods like SFT, RFT \citep{trung-etal-2024-reft}, DPO, PPO, and GRPO as maximum likelihood estimation governed by a \textit{Gradient Coefficient}. This coefficient fundamentally operates by amplifying gradients for favored responses and suppressing others, with its magnitude modulating the preference intensity. Thus, the core challenge in policy gradient methods reduces to the accurate estimation of this \textit{Gradient Coefficient} (i.e., the advantage and importance ratio). For instance, AAPO \citep{xiong2025aapo} redefines advantage estimation by incorporating advantage momentum. GAPO \citep{gapo}, GVPO \citep{gvpo} and $\Delta L$ Normalization \citep{he2025deltalnormalizationrethink} employ gradient normalization to adaptively rescale each objective's gradients, thereby finding a low-variance estimator. Meanwhile, \citet{zhao2025learning} and \citet{qian2025demystifying} utilize a model’s own internal confidence measure (or entropy)—termed self-certainty to improve reasoning skills. Additionally, hybrid approaches that integrate RL with SFT on external demonstration data have been actively explored \citep{cai2025backtrack, ma2025cant, LUFFY, TAPO, rlplus}. Despite these empirical advances, the fundamental question of whether RLVR expands \citep{liu2025prorl,wen2025reinforcement,liu2025scaling,wu2025invisible,yuan2025llms,wang2025emergent,bai2025online} or shrinks \citep{yue2025does,zhao2025echo,song2025mind,dang2025assessing,he2025rewarding,ma2025learning,shah2025rethinking,gandhi2025cognitive,dang2025weight,o'mahony2024attributing} the reasoning capacities of LLMs remains an open and actively debated issue. \textit{This is precisely what we aim to uncover}.

\textbf{LLM Learning Dynamics}. Deep neural networks learn by adjusting their parameters through gradient descent. This process, known as learning dynamics, connects how model predictions change to the gradients from individual training examples. \textit{Learning dynamics prioritizes the analysis of a model's relative training behavior over its convergence, providing a means to assess the quality of individual training samples}. To name a few, \citet{pruthi2020estimating} introduce ``TracIn", a metric that measures how much a training example affects a model's predictions, \citet{xia2024less} later use it to identify the most influential examples during instruction fine-tuning of LLMs. In a similar vein, \citet{guo2024lpntk} propose a method based on the neural tangent kernel (NTK) regime to estimate the relative difficulty among different training samples. Furthermore, \citet{renlearning} highlight a unique ``squeezing effect'' to explain a previously observed phenomenon in off-policy direct preference optimization (DPO \citep{dpo}), where running DPO for too long makes even the desired outputs less likely. Since RLVR methods—exemplified by PPO and GRPO—are on-policy and dynamically evolving, we argue that analyzing learning dynamics can naturally offer a novel perspective for understanding the hot debate (capability boundary shrinkage or expansion) in RLVR.

\textbf{Gradient Analysis in Preference Optimization}. DPO \citep{dpo} has proven highly effective, as it relies solely on an offline dataset of paired preference data. However, this reliance on paired data restricts its applicability in settings where only unpaired feedback (e.g., solely positive or negative responses) is available. In response, \citet{abdolmaleki2025learning} introduce a decoupled approach that independently controls the influence of positive and negative signals, enabling learning even when only a single feedback type is available. Regarding online update methods, RAFT++ \citep{dong2023raft, xiong2025minimalist}—a simple rejection sampling approach utilizing only positively rewarded data—has been shown to deliver performance competitive with GRPO. Conversely, \citet{zhu2025surprising} report the surprising effectiveness of training exclusively on negatively rewarded samples using REINFORCE \citep{REINFORCE}, without reinforcing correct responses. \textit{As we demonstrate in the main text, the set of samples considered ``negative" is not static but evolves dynamically throughout optimization}. It imperative to analyze the underlying learning dynamics. In addition, \citet{yang2025unearthinggemsstonespolicy} and \citet{chen2025spectral} find that negative responses hold learning value (e.g., self-reflection). However, existing methods overlook this by either discarding them (RFT) or applying uniform penalties (RL), failing to leverage these nuanced signals. There are also some token-level gradient analyses: \citet{yang2025letlowoverdominate} identify that RL training is skewed by low-probability tokens' excessive gradient magnitudes, impeding the learning from essential high-probability tokens; \citet{deng2025effectnegativegradientgroup} empirically observe that GRPO can suffer from what we call Lazy Likelihood Displacement: a failure to sufficiently increase, or even a decrease in, the likelihood of correct answers during training. \textit{The above motivates us to analyze the expected update in RLVR, once again emphasizing the essential role of fine-grained probability mass allocation}.

\newpage
\section*{LLM Usage}
Regarding the use of LLMs, they were employed solely for language polishing purposes and played no role in research ideation, literature retrieval, or any other academically substantive activities.
\section{Omitted Proofs and Additional Results}
\subsection{Proof of Equation 2}\label{app:proof_eq2}
\begin{proof}
We begin by reviewing the objective function of GRPO below.
\begin{align*}
     \mathcal{J}_{\text{GRPO}}(\theta) =& \mathbb{E}_{\mathbf{x} \sim \mathcal{D},\ \{\mathbf{y}_i\}_{i=1}^G \sim \pi_{\theta_{\text{old}}}(\cdot\mid\mathbf{x})} \\
&\left[
\frac{1}{G} \sum_{i=1}^{G} \frac{1}{|\mathbf{y}_i|} \sum_{t=1}^{|\mathbf{y}_i|}\left \{ \min\left(
w_{i,t}(\theta) \hat{A}_{i,t},
\text{clip}\big(w_{i,t}(\theta), 1 - \epsilon, 1 + \epsilon\big) \hat{A}_{i,t}
\right)
\right \}-\beta\mathbb{D}_{\text{KL}}(\pi_{\theta}\mid\mid\pi_{\text{ref}})\right],  
\end{align*}
where $w_{i,t}(\theta) = \frac{\pi_{\theta}(y_{i,t} \mid \mathbf{x}, \mathbf{y}_{i,<t})}{\pi_{\theta_{\text{old}}}(y_{i,t} \mid \mathbf{x}, \mathbf{y}_{i,<t})}, \mathbb{D}_{\text{KL}}(\pi_{\theta}\mid\mid\pi_{\text{ref}})=\frac{\pi_{\text{ref}}(y_{i,t} \mid \mathbf{x}, \mathbf{y}_{i,<t})}{\pi_{\theta}(y_{i,t} \mid \mathbf{x}, \mathbf{y}_{i,<t})}-\log \frac{\pi_{\text{ref}}(y_{i,t} \mid \mathbf{x}, \mathbf{y}_{i,<t})}{\pi_{\theta}(y_{i,t} \mid \mathbf{x}, \mathbf{y}_{i,<t})}-1$, $\beta$ is the coefficient.

To better understand the model’s learning dynamics under this binary outcome reward setting, we
omit the regularization components (e.g., KL term \& clipping operation):
\begin{align*}
     \mathcal{J}_{\text{GRPO}}(\theta) =& \mathbb{E}_{\mathbf{x} \sim \mathcal{D},\ \{\mathbf{y}_i\}_{i=1}^G \sim \pi_{\theta_{\text{old}}}(\cdot\mid\mathbf{x})} \left [\frac{1}{G} \sum_{i=1}^{G} \frac{1}{|\mathbf{y}_i|} \sum_{t=1}^{|\mathbf{y}_i|}
w_{i,t}(\theta) \hat{A}_{i,t}\right ],  
\end{align*}

\begin{align*}\label{eq:main2}
 \nabla_{\theta}\mathcal{J}_{\text{GRPO}}(\theta) \ &= \mathbb{E}_{\mathbf{x} \sim \mathcal{D},\ \{\mathbf{y}_i\}_{i=1}^G \sim \pi_{\theta_{\text{old}}}(\cdot\mid\mathbf{x})} \left [\frac{1}{G} \sum_{i=1}^{G} \frac{1}{|\mathbf{y}_i|} \sum_{t=1}^{|\mathbf{y}_i|}
\nabla_{\theta}w_{i,t}(\theta) \hat{A}_{i,t}\right ] \\
&=\mathbb{E}_{\mathbf{x} \sim \mathcal{D},\ \{\mathbf{y}_i\}_{i=1}^G \sim \pi_{\theta_{\text{old}}}(\cdot\mid\mathbf{x})} \left [\frac{1}{G} \sum_{i=1}^{G} \frac{1}{|\mathbf{y}_i|} \sum_{t=1}^{|\mathbf{y}_i|}
\frac{\nabla_{\theta}\pi_{\theta}(y_{i,t} \mid \mathbf{x}, \mathbf{y}_{i,<t})}{\pi_{\theta_{\text{old}}}(y_{i,t} \mid \mathbf{x}, \mathbf{y}_{i,<t})} \hat{A}_{i,t}\right ] \\
 &=\mathbb{E}_{\mathbf{x} \sim \mathcal{D},\ \{\mathbf{y}_i\}_{i=1}^G \sim \pi_{\theta_{\text{old}}}(\cdot\mid\mathbf{x})} \left [\frac{1}{G} \sum_{i=1}^{G} \frac{1}{|\mathbf{y}_i|} \sum_{t=1}^{|\mathbf{y}_i|}
\frac{\pi_{\theta}(y_{i,t} \mid \mathbf{x}, \mathbf{y}_{i,<t})}{\pi_{\theta_{\text{old}}}(y_{i,t} \mid \mathbf{x}, \mathbf{y}_{i,<t})} \hat{A}_{i,t}\nabla_{\theta}\log\pi_{\theta}(y_{i,t} \mid \mathbf{x}, \mathbf{y}_{i,<t})\right ] \\
 \\ &=\mathbb{E}_{\mathbf{x} \sim \mathcal{D},\ \{\mathbf{y}_i\}_{i=1}^G \sim \pi_{\theta_{\text{old}}}(\cdot\mid\mathbf{x})} \left[ \frac{1}{G} \sum_{i=1}^{G} \frac{1}{|\mathbf{y}_i|} \sum_{t=1}^{|\mathbf{y}_i|} 
\underbrace{w_{i,t}(\theta)\hat{A}_{i,t}}_{\text{coefficient }} 
\nabla_{\theta} \log \pi_{\theta}(y_{i,t} \mid \mathbf{x}, \mathbf{y}_{i,<t}) \right].
\end{align*}
We complete the proof of Equation \ref{eq:main2}. Notice that $w_{i,t}$ does not affect the sign of $\hat{A}_{i,t}$.

Besides, one can also consider the gradient of the KL term (denote $\pi(y_{i,t} \mid \mathbf{x}, \mathbf{y}_{i,<t})$ as $\pi(y_{i,t})$):
\begin{align*}
 \nabla_{\theta}\beta\mathbb{D}_{\text{KL}}(\pi_{\theta}\mid\mid\pi_{\text{ref}})&= \beta\nabla_{\theta}\frac{\pi_{\text{ref}}(y_{i,t})}{\pi_{\theta}(y_{i,t})}-\beta\nabla_{\theta}\log \frac{\pi_{\text{ref}}(y_{i,t})}{\pi_{\theta}(y_{i,t})} \\
 &= -\beta\frac{\pi_{\text{ref}}(y_{i,t})}{\pi^2_{\theta}(y_{i,t})}\nabla_{\theta}\pi_{\theta}(y_{i,t})+\beta\nabla_{\theta}\log \pi_{\theta}(y_{i,t}) \\
 &= -\left[\beta\frac{\pi_{\text{ref}}(y_{i,t})}{\pi_{\theta}(y_{i,t})}-\beta\right]\nabla_{\theta}\log \pi_{\theta}(y_{i,t}).
\end{align*}
\end{proof}
\newpage
\subsection{Proof of Lemma 1}\label{app:proof_lemma1}
\begin{proof}
 Re-stating the Lemma \ref{main:lemma1}, the output of a model is the logits $\mathbf{z}=[z_1,...,z_V]^T$, which corresponds to a finite (size $V$) vocabulary set $\mathcal{V}=\{v_1,...,v_V\}$. The policy probability of the corresponding action (token) is calculated by:   $\pi(v) = \text{Softmax}(\mathbf{z})_v=\exp{(z_v)}/\sum^V_{v^{'}}\exp{(z_{v^{'}})}$.

 That is, $\log\pi(v)=\log(\exp (z_v))-\log(\sum_{v^{'}}^V\exp (z_v^{'}))=z_v-\log(\sum_{v^{'}}^V\exp (z_v^{'})).$

 Thus, for the currently sampled token $v$, let $z_v$ be its corresponding logit, we will have:
 $$
 \frac{\partial \log\pi(v)}{\partial z_v} = 1-\pi(v),
 $$
 for other unsampled tokens $u\neq v$ (with its logit $z_u$):
 $$
 \frac{\partial \log\pi(v)}{\partial z_u} = -\pi(u).
 $$
 Apply those to the gradient $\nabla_{\mathbf{z}}\mathcal{J}=\hat{A}(v)\nabla_{\mathbf{z}}\log \pi(v)$, we complete the proof of Lemma \ref{main:lemma1}.
\end{proof}

\subsection{Proposition 1 and Proof}\label{app:proof_p1}
\begin{proposition}\label{main:proposition1} Let the conditions specified in Lemma \ref{main:lemma1} hold, and denote $\Delta \mathbf{z}(\mathbf{x})=[\Delta z_1,...,\Delta z_V]^T$, the $l$-th step probability mass dynamics decompose as: 
 \begin{equation*}
     \Delta \log \pi^l(\mathbf{y}\mid \mathbf{x})=\left[I-\mathbf{e}(\pi^l(\mathbf{y}\mid \mathbf{x}))^T\right]\left[(\nabla_{\theta} \mathbf{z}^{\theta^l}(\mathbf{x}))(\nabla_{\theta} \mathbf{z}^{\theta^l}(\mathbf{x}))^T\right]\Delta \mathbf{z}^{l}(\mathbf{x})+\mathcal{O}(\eta^2\left\|\nabla_{\theta} \mathbf{z}^{\theta^l}(\mathbf{x})\right\|^2),
 \end{equation*}
 where $I$ is the identity matrix and $\mathbf{e}=[1,1,...,1]^T$, $\left[(\nabla_{\theta} \mathbf{z}^{\theta^l}(\mathbf{x}))(\nabla_{\theta} \mathbf{z}^{\theta^l}(\mathbf{x}))^T\right] \in \mathbb{R}^{V\times V}$ is the empirical neural tangent kernel, $\Delta\log \pi^l(\mathbf{y}\mid \mathbf{x})\in \mathbb{R}^{V\times 1}$. $\Delta \mathbf{z}(\mathbf{x})=\eta \nabla_{\mathbf{z}} \mathcal{J}\in \mathbb{R}^{V\times 1}$, which mainly determines the \textbf{direction and magnitude} of the policy update.
\end{proposition}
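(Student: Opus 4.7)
The plan is to derive the decomposition as a first-order Taylor expansion of the log-policy in parameter space, then factor the resulting Jacobian through the logits via the chain rule. Concretely, I view $\log \pi(\cdot\mid\mathbf{x})$ as the vector-valued composition $\theta \mapsto \mathbf{z}^\theta(\mathbf{x}) \mapsto \log \text{Softmax}(\mathbf{z}^\theta(\mathbf{x}))$ and apply the standard learning-dynamics template: write $\theta^{l+1}-\theta^l$ in terms of the policy gradient, Taylor-expand the composition in $\theta$, and collect terms so that the empirical neural tangent kernel $(\nabla_\theta \mathbf{z})(\nabla_\theta \mathbf{z})^T$ and the log-softmax Jacobian $I-\mathbf{e}\pi^T$ appear explicitly.

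First I would write the (unregularized) gradient step as $\theta^{l+1}-\theta^l = \eta(\nabla_\theta \mathbf{z}^{\theta^l}(\mathbf{x}))^T \nabla_{\mathbf{z}}\mathcal{J}$, using the chain rule $\nabla_\theta \mathcal{J} = (\nabla_\theta \mathbf{z})^T \nabla_{\mathbf{z}}\mathcal{J}$ on the policy-gradient objective analyzed in Lemma~\ref{main:lemma1}. Then I Taylor-expand $\log \pi^{l+1}(\mathbf{y}\mid\mathbf{x})$ around $\theta^l$ to first order,
\[
\Delta \log \pi^l(\mathbf{y}\mid\mathbf{x}) = (\nabla_\theta \log \pi^l(\mathbf{y}\mid\mathbf{x}))\,(\theta^{l+1}-\theta^l) + \mathcal{O}(\|\theta^{l+1}-\theta^l\|^2),
\]
and factor $\nabla_\theta \log \pi^l = (\nabla_{\mathbf{z}}\log\pi^l)(\nabla_\theta \mathbf{z}^{\theta^l})$ by the chain rule, so that substitution yields exactly the product of the log-softmax Jacobian, the empirical NTK, and $\eta\nabla_{\mathbf{z}}\mathcal{J} = \Delta \mathbf{z}^l(\mathbf{x})$ on the right-hand side.

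The log-softmax Jacobian is then computed directly: from $\log\pi_v = z_v - \log\sum_u e^{z_u}$, one gets $\partial \log\pi_v/\partial z_u = \delta_{vu}-\pi(u)$, and as a $V\times V$ matrix this is exactly $I - \mathbf{e}\pi^T$ with $\mathbf{e}=[1,\ldots,1]^T$; this is essentially the vectorized restatement of the per-logit update rule in Lemma~\ref{main:lemma1}. Assembling the three factors reproduces the main term of the claim verbatim.

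The only substantive care needed is the remainder. The second-order Taylor term is bounded by the operator norm of $\nabla_\theta^2 \log \pi^l$ times $\|\theta^{l+1}-\theta^l\|^2$; since $\|\theta^{l+1}-\theta^l\| \le \eta\,\|\nabla_\theta\mathbf{z}^{\theta^l}(\mathbf{x})\|\cdot\|\nabla_{\mathbf{z}}\mathcal{J}\|$ and $\|\nabla_{\mathbf{z}}\mathcal{J}\|$ is uniformly controlled by the bounded advantage estimates (absorbing the softmax-Jacobian and Hessian bounds into the implicit constant), this contributes $\mathcal{O}(\eta^2\|\nabla_\theta\mathbf{z}^{\theta^l}(\mathbf{x})\|^2)$, matching the stated error. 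I expect the main obstacle to be not algebraic but the bookkeeping of shapes ($V$ versus $|\theta|$) so that the NTK factor $(\nabla_\theta\mathbf{z})(\nabla_\theta\mathbf{z})^T\in\mathbb{R}^{V\times V}$ sits in the correct position between $I-\mathbf{e}\pi^T$ and $\Delta\mathbf{z}^l$, together with absorbing the log-softmax second derivatives (which stay bounded whenever $\pi$ stays in the simplex) into the remainder constant; both are routine.
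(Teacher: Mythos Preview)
Your proposal is correct and follows essentially the same route as the paper: a first-order Taylor expansion of $\log\pi$ in $\theta$, the chain-rule factorization $\nabla_\theta\log\pi = (\nabla_{\mathbf z}\log\pi)(\nabla_\theta\mathbf z)$ together with $\theta^{l+1}-\theta^l=\eta(\nabla_\theta\mathbf z)^T\nabla_{\mathbf z}\mathcal J$, and the identification of the log-softmax Jacobian as $I-\mathbf e\pi^T$. Your handling of the remainder via boundedness of $\nabla_{\mathbf z}\mathcal J$ and the inequality $\|\theta^{l+1}-\theta^l\|\le\eta\|\nabla_\theta\mathbf z\|\|\nabla_{\mathbf z}\mathcal J\|$ is exactly what the paper does as well.
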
 
\begin{proof}
Recall the log probabilities change in Eq. (4):
\begin{align*}
 \Delta \log \pi^l(\mathbf{y}\mid \mathbf{x}) &\triangleq \log \pi_{\theta^{l+1}}(\mathbf{y}\mid \mathbf{x}) - \log \pi_{\theta^{l}}(\mathbf{y}\mid \mathbf{x}) :=   \log \pi^{l+1}(\mathbf{y}\mid \mathbf{x}) - \log \pi^{l}(\mathbf{y}\mid \mathbf{x}) ,
\end{align*}
and we follow \citet{renlearning} using Taylor expansion to approximate $\log \pi^{l+1}(\mathbf{y}\mid \mathbf{x})$:
\begin{align*}
 \log \pi^{l+1}(\mathbf{y}\mid \mathbf{x}) = \log \pi^{l}(\mathbf{y}\mid \mathbf{x})  + \langle\nabla \log \pi^{l}(\mathbf{y}\mid \mathbf{x}),\theta^{l+1}-\theta^l\rangle+\mathcal{O}(\left\|\theta^{l+1}-\theta^l\right \|^2).
\end{align*}
Then, supposing the parameters' are updated by policy gradient, we will have (the model parameters $\theta \in \mathbb{R}^{d\times1}$):
\begin{align*}
     \Delta \log \pi^l(\mathbf{y}\mid \mathbf{x}) = \nabla_{\theta} \log \pi^{l}(\mathbf{y}\mid \mathbf{x})(\theta^{l+1}-\theta^l)+\mathcal{O}(\left\|\theta^{l+1}-\theta^l\right \|^2).
\end{align*}
Next, we use the definition of gradient and the chain rule:
\begin{align*}
\nabla_{\theta} \log \pi^{l}(\mathbf{y}\mid \mathbf{x})(\theta^{l+1}-\theta^l) &= \left[\nabla_{\mathbf{z}^{\theta^l}}\log \pi^{l}(\mathbf{y}\mid \mathbf{x})(\nabla_{\theta} \mathbf{z}^{\theta^l}(\mathbf{x}))\right] \left[\eta \nabla_{\theta^l} \mathcal{J}\right]^T \\
&= \left[\nabla_{\mathbf{z}^{\theta^l}}\log \pi^{l}(\mathbf{y}\mid \mathbf{x})(\nabla_{\theta} \mathbf{z}^{\theta^l}(\mathbf{x}))\right] \left[\eta \nabla_{\mathbf{z}^{\theta^l}} \mathcal{J}(\nabla_{\theta} \mathbf{z}^{\theta^l}(\mathbf{x}))\right]^T \\
&= \nabla_{\mathbf{z}^{\theta^l}}\log \pi^{l}(\mathbf{y}\mid \mathbf{x})\left[(\nabla_{\theta} \mathbf{z}^{\theta^l}(\mathbf{x}))(\nabla_{\theta} \mathbf{z}^{\theta^l}(\mathbf{x}))^T\right] (\eta \nabla_{\mathbf{z}^{\theta^l}} \mathcal{J}) \\
&= \left[I-\mathbf{e}(\pi^l(\mathbf{y}\mid \mathbf{x}))^T\right]\left[(\nabla_{\theta} \mathbf{z}^{\theta^l}(\mathbf{x}))(\nabla_{\theta} \mathbf{z}^{\theta^l}(\mathbf{x}))^T\right]\Delta \mathbf{z}^{l}(\mathbf{x}).
\end{align*}
For the higher-order term:
\begin{align*}
\theta^{l+1}-\theta^l = \eta \nabla_{\theta^l} \mathcal{J} = \eta (\nabla_{\theta} \mathbf{z}^{\theta^l}(\mathbf{x}))^T\nabla_{\mathbf{z}^{\theta^l}} \mathcal{J},    
\end{align*}
and from the practical application and Lemma \ref{main:lemma1}, the term $\nabla_{\mathbf{z}^{\theta^l}} \mathcal{J}$ is usually bounded, we get:
\begin{align*}
 \mathcal{O}(\left\|\theta^{l+1}-\theta^l\right \|^2)=\mathcal{O}(\eta^2\left\|\nabla_{\theta} \mathbf{z}^{\theta^l}(\mathbf{x})\right \|^2).  
\end{align*}
We complete the proof. $\left[(\nabla_{\theta} \mathbf{z}^{\theta^l}(\mathbf{x}))(\nabla_{\theta} \mathbf{z}^{\theta^l}(\mathbf{x}))^T\right] \in \mathbb{R}^{V\times V}$ denotes the empirical neural tangent kernel (NTK), which remains nearly constant throughout the training process \citep{renlearning, arora2019exact, jacot2018neural}. As a result, $\Delta \mathbf{z}^l(\mathbf{x})$ primarily governs both the direction and magnitude of the policy update.
\end{proof}
\newpage
\subsection{Proof of Theorem 1}\label{app:proof_p2}
\textbf{Theorem 1.} Under the conditions stated in Lemma \ref{main:lemma1}, we assume that $\mathbf{x}  \sim \mathcal{D}$ is i.i.d., the expected group relative policy gradient $\nabla_{\mathbf{z}} \mathcal{J} \in \mathbb{R}^{V\times1}$ is $\mathbb{E}_{\mathbf{x}\sim \mathcal{D},\{u_i\}_{i=1}^G \sim \pi(\cdot\mid \mathbf{x})} \left[\frac{1}{G}\sum_{i=1}^G\hat{A}(u_i) \nabla_{\mathbf{z}} \log \pi(u_i)\right]$. Then the expected logits update is:
\begin{align*}
    \mathbb{E}(\Delta z_v^l)=\eta\cdot\mathbb{E}_{u \sim \pi^l(\cdot\mid \mathbf{x})} \left[\hat{A}(u) \nabla_{z_v^l} \log \pi^l(u)\right]=\eta\cdot\pi^l(v)\left[(1-\pi^l(v))\hat{A}(v)-\sum_{u\neq v}\pi^l(u)\hat{A}(u)\right].
\end{align*}
\begin{proof}
 From Lemma \ref{main:lemma1}, the policy gradient of sampling a token (action) $u$ once from the policy $\pi^l(\cdot\mid \mathbf{x})$ is $\hat{A}(u)\nabla_{\mathbf{z}}\log\pi(u)$. Thus, the expected group relative policy gradient is the following:
 \begin{align*}
  \nabla_{\mathbf{z}}\mathcal{J}= \mathbb{E}_{\mathbf{x}\sim \mathcal{D},\{u_i\}_{i=1}^G \sim \pi(\cdot\mid \mathbf{x})} \left[\frac{1}{G}\sum_{i=1}^G\hat{A}(u_i) \nabla_{\mathbf{z}} \log \pi(u_i)\right].   
 \end{align*}
 Given that $\mathbf{x} \sim \mathcal{D}$ is i.i.d. and $\{u_i\}_{i=1}^G$ are randomly sampled from $\pi(\cdot\mid \mathbf{x})$, we derive an unbiased estimator:
 \begin{align*}
  \nabla_{\mathbf{z}}\mathcal{J}= \mathbb{E}_{u \sim \pi(\cdot\mid \mathbf{x})} \left[\hat{A}(u) \nabla_{\mathbf{z}} \log \pi(u)\right]=\sum_u\pi(u) \hat{A}(u) \nabla_{\mathbf{z}} \log \pi(u) \in \mathbb{R}^{V\times 1}.   
 \end{align*}
Apply those to Lemma \ref{main:lemma1}, we complete the proof.
\end{proof}
\subsection{Details of Relative Negative Gradients}\label{app:details_n}
Referring back to Eq.(\ref{eq:main1}) and Eq.(\ref{eq:main2}), taking GRPO as an example, we obtain the gradient of the objective function in the following form.
\begin{align*}
 \nabla_{\theta}\mathcal{J}_{\text{GRPO}}(\theta) \ = \mathbb{E}_{\mathbf{x},\{\mathbf{y}_i\}_{i=1}^G\sim \pi_{\theta_{\text{old}}}(\cdot\mid\mathbf{x}) } \left[ \frac{1}{G} \sum_{i=1}^{G} \frac{1}{|\mathbf{y}_i|} \sum_{t=1}^{|\mathbf{y}_i|} 
\underbrace{w_{i,t}(\theta)\hat{A}_{i,t}}_{\text{coefficient }} 
\nabla_{\theta} \log \pi_{\theta}(y_{i,t} \mid \mathbf{x}, \mathbf{y}_{i,<t}) \right].    
\end{align*}
Since the advantage $\hat{A}_{i,t}$ is estimated from the currently sampled group $i=1,\cdots,G$ each time, we refer to it as the relative advantage, and correspondingly, this gradient is termed the relative policy gradient. Consequently, for the relative negative gradients, we exclusively utilize gradient information where $\hat{A}_{i,t}<0$ during the gradient update:
\begin{align*}
  \nabla_{\theta}\mathcal{J}_{\text{GRPO-N}}(\theta) \ = \mathbb{E}_{\mathbf{x},\{\mathbf{y}_i\}_{i=1}^G\sim \pi_{\theta_{\text{old}}}(\cdot\mid\mathbf{x}) } \left[ \frac{1}{G} \sum_{i=1}^{G} \frac{1}{|\mathbf{y}_i|} \sum_{t=1}^{|\mathbf{y}_i|} 
\mathbb{I}(\hat{A}_{i,t})\cdot w_{i,t}(\theta)\hat{A}_{i,t}
\nabla_{\theta} \log \pi_{\theta}(y_{i,t} \mid \mathbf{x}, \mathbf{y}_{i,<t}) \right],
\end{align*}
where $\mathbb{I}(\hat{A}_{i,t})$ is an indicator variable that equals 1 if $\hat{A}_{i,t} < 0$, and 0 otherwise.
\newpage
\section{Extension to Experiments}
\textbf{Reproducibility statement}. We employed open-source algorithms and data to validate our theoretical analysis, and have reported all hyperparameter settings to facilitate reproducibility.\\
(1) open-source code: \url{https://github.com/volcengine/verl}.\\
(2) all datasets can be found in: \url{https://huggingface.co/datasets}.\\
(3) toy example details are provided in: Algorithm \ref{app:Algorithm1}.
\subsection{Algorithm for Logits Update}\label{app:Algorithm1}

\begin{algorithm}[H]
\caption{Logits Update for Softmax Parameterization: A Toy Example}
\begin{algorithmic}
\State \textbf{Require:} learning rate $\eta$, number of samples per update $G$, true rewards $r$, optimization steps $N$
\State Initialize policy parameters (logits) $\mathbf{z}$
\For{$l = 1$ to $N$}
    \State Compute current policy $\pi \gets \text{Softmax}(\mathbf{z})$
    \State Sample $G$ actions from policy $\pi$: $\{a_1, a_2, ..., a_G\}$
    \State Estimate advantage $\hat{A}[a_i] = r[a_i] - \text{mean}(\{r[a_j]\}_{j=1}^G)$ 
    \State Initialize relative policy gradient $\mathbf{g} \gets \mathbf{0}$
    \For{each sampled action $a_i$ where $i = 1$ to $G$}
        \State $\mathbf{g}[a_i] \gets \mathbf{g}[a_i] + (1 - \pi[a_i]) \cdot \hat{A}[a_i]$
        \For{each other action $a_j \neq a_i$}
            \State $\mathbf{g}[a_j] \gets \mathbf{g}[a_j] - \pi[a_j] \cdot \hat{A}[a_i]$
        \EndFor
    \EndFor
    \State Apply Adam update: $\mathbf{z} \gets \mathbf{z} + \eta \cdot \mathbf{g} / G$
\EndFor
\State \Return Optimized policy parameters $\mathbf{z}$
\end{algorithmic}\label{alg1}    
\end{algorithm}

\subsection{Hyperparameter Settings}\label{app:sub_settings}
Our experimental configuration follows that of \citet{zhu2025surprising}.

\textbf{Training setup}.
The prompt batch size is set to 1,024, with 8 rollouts generated per prompt. During training, the sampling temperature is set to 1.0. The maximum context length is configured as 4,096 tokens for both \texttt{Qwen2.5-Math-7B} and \texttt{Llama-3.2-3B-Instruct}. Model updates are performed with a mini-batch size of 256 and a learning rate of $1\times 10^{-6}$. For all algorithms, a KL penalty term is incorporated into the final loss function, using a coefficient of $1\times 10^{-3}$. The clip ratio is set to 0.2. Additionally, an entropy bonus is applied to all objectives with a coefficient of $1\times 10^{-4}$. All experiments are conducted on a single node with 4 NVIDIA A100 GPUs.

\textbf{Evaluation setup}. During evaluation, we sample 256 responses per prompt for both \texttt{Qwen2.5-Math-7B} and \texttt{Llama-3.2-3B-Instruct} using a temperature of 0.6 and a top‑$p$ value of 0.95. Since the test sets of ARC-c (1,170) and MMLU-Pro (12,000) are relatively large, and sampling 256 times requires substantial computation time, we randomly selected 128 questions and repeated the test three times to obtain the average.

\textbf{Prompt template}. Our primary objective is to validate theoretical findings; therefore, a uniform prompt \citep{zeng2025simplerl} was sampled for all models:
\begin{lstlisting}[style=PythonStyle]
<|im_start|>system
You are a helpful assistant.<|im_end|>
<|im_start|>user
{input}
Please reason step by step, and put your final answer within \boxed{}.
<|im_end|>
<|im_start|>assistant
\end{lstlisting}

\newpage
\subsection{More Evaluation Results}\label{app:more evaluation}
\textbf{Training dynamics of LLama-3.2-3B-Instruct}.
\begin{figure}[ht]
  \centering
    \begin{subfigure}[b]{0.2445\textwidth}
        \centering
        \includegraphics[height=3.42cm]{   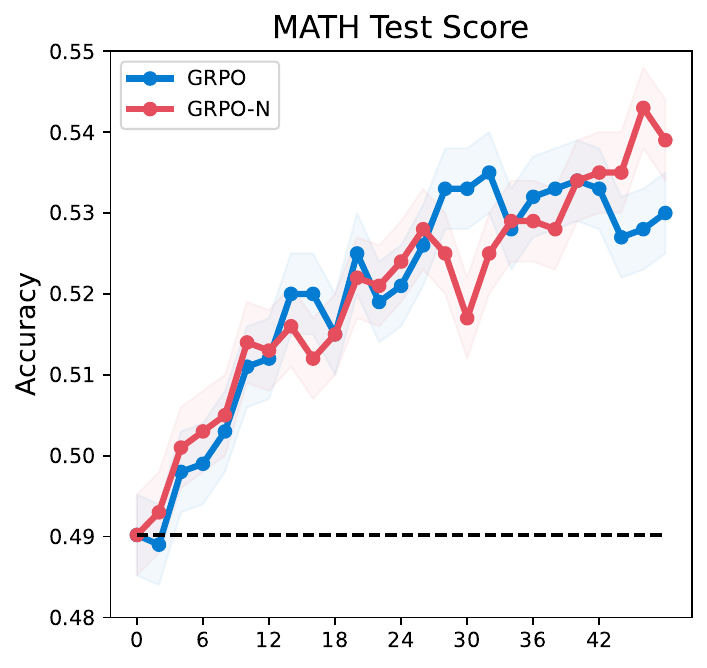}
    \end{subfigure}
    \begin{subfigure}[b]{0.2445\textwidth}
        \centering
        \includegraphics[height=3.42cm]{   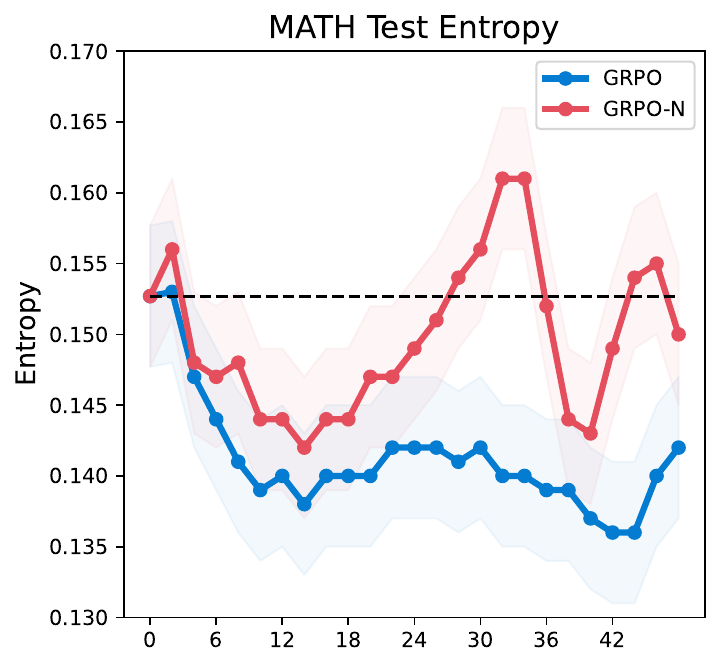}
    \end{subfigure}
    \begin{subfigure}[b]{0.2445\textwidth}
        \centering
        \includegraphics[height=3.42cm]{   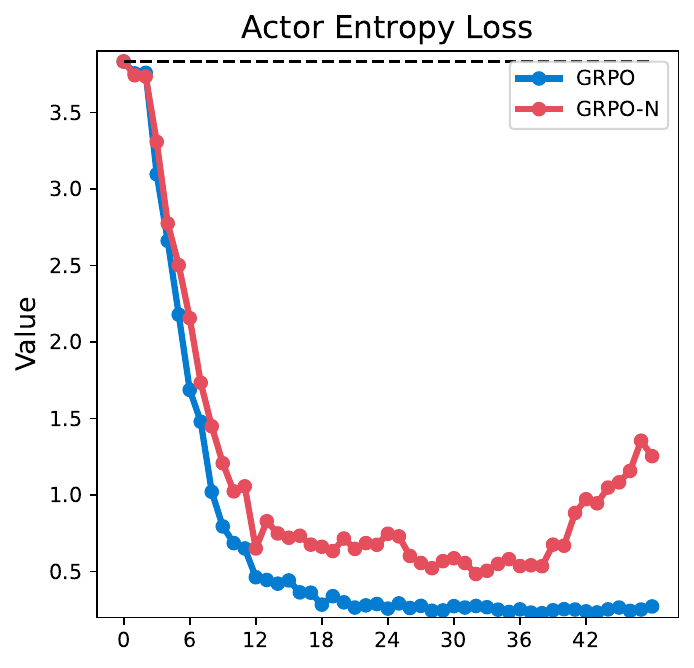}
    \end{subfigure}
    \begin{subfigure}[b]{0.2445\textwidth}
        \centering
        \includegraphics[height=3.42cm]{   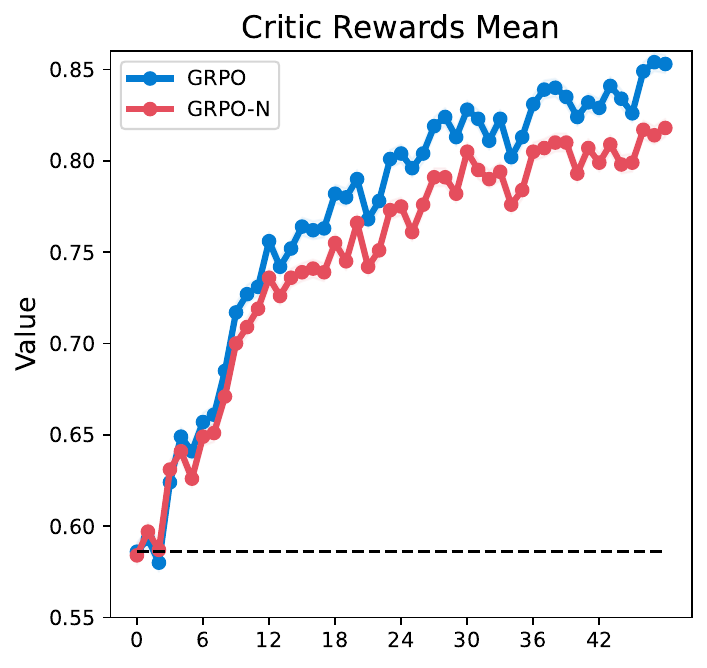}
    \end{subfigure} 
  \caption{Comparison of the training dynamics of GRPO, GRPO-N on the MATH benchmark across training steps, using the \texttt{LLama-3.2-3B-Instruct} model with a prompt batch size of 1,024. \textbf{Left Part:} (Left) the greedy decoding accuracy on the MATH test set and (Center Left) the model's entropy on the MATH test set. \textbf{Right Part:} (Center Right) the actor entropy loss and (Right) critic rewards mean during training.}
  \label{app:figure3}
\end{figure}

\begin{table*}[htb]
\centering
\setlength{\tabcolsep}{8pt}
\small
\caption{Pass@$k$ of \texttt{Llama-3.2-3B-Instruct} on AMC 2023, AIME 2024, AIME 2025. For each $k$, \textbf{bold} and \underline{underlined} numbers indicate the best and second-best results, respectively.}
\begin{tabular}{lccccccccc}
\toprule
\textbf{Algorithm} & \multicolumn{9}{c}{\textbf{Pass@$k$}} \\
$k$ & 1 & 2 & 4 & 8 & 16 & 32 & 64 & 128 & 256 \\
\midrule
\textbf{} & \multicolumn{9}{c}{\textbf{AMC 2023}} \\
Base Model & 23.4 & 34.3 & 47.7 & \textbf{61.7} & \textbf{74.4} & \textbf{84.7} & \textbf{92.1} & \textbf{96.8} & \textbf{100.0} \\
GRPO & \textbf{31.1} & \textbf{41.7} & \underline{51.3} & 58.7 & 64.7 & 70.7 & 76.9 & 83.0 & 87.5 \\
    \rowcolor{gray!10} GRPO-N & \underline{30.3} & \underline{41.6} & \textbf{52.4} & \underline{60.8} & \underline{67.5} & \underline{74.2} & \underline{81.0} & \underline{87.4} & \underline{92.5} \\
\midrule
\textbf{} & \multicolumn{9}{c}{\textbf{AIME 2024}} \\
Base Model & 6.9 & 11.5 & 17.5 & 23.8 & 29.4 & 33.7 & \textbf{37.5} & \textbf{42.7} & \textbf{50.0} \\
GRPO & \underline{15.7} & \underline{20.6} & \underline{25.1} & \underline{29.1} & \underline{32.2} & \underline{34.4} & 36.1 & 37.9 & 40.0 \\
\rowcolor{gray!10} GRPO-N & \textbf{16.2} & \textbf{21.2} & \textbf{25.8} & \textbf{29.9} & \textbf{33.2} & \textbf{35.2} & \underline{37.3} & \underline{40.8} & \underline{46.7} \\
\midrule
\textbf{} & \multicolumn{9}{c}{\textbf{AIME 2025}} \\
Base Model & 0.4 & 0.9 & 1.7 & 3.2 & 5.6 & \underline{9.2} & \underline{14.6} & \textbf{23.2} & \textbf{36.7} \\
GRPO & \textbf{0.6} & \textbf{1.1} & \textbf{2.1} & \underline{3.8} & \underline{6.2} & 9.0 & 11.7 & 14.4 & 16.7 \\
\rowcolor{gray!10} GRPO-N & \underline{0.5} & \underline{1.0} & \underline{2.0} & \textbf{3.8} & \textbf{6.6} & \textbf{10.7} & \textbf{15.5} & \underline{20.6} & \underline{26.7} \\
\bottomrule
\end{tabular}
\end{table*}

\begin{table*}[ht]
\centering
\setlength{\tabcolsep}{8pt}
\small
\caption{Evaluation results of \texttt{Qwen2.5-Math-7B} on MATH-500. For each $k$, \textbf{bold} and \underline{underlined} numbers indicate the best and second-best results, respectively.}
\begin{tabular}{lccccccccc}
\toprule
\textbf{Algorithm} & \multicolumn{9}{c}{\textbf{Pass@$k$}} \\
$k$ & 1 & 2 & 4 & 8 & 16 & 32 & 64 & 128 & 256 \\
\midrule
\textbf{} & \multicolumn{9}{c}{\textbf{MATH-500}} \\
Base Model & 40.7 & 51.5 & 58.9 & 64.1 & \textbf{68.5} & \textbf{72.9} & \textbf{77.9} & \textbf{83.2} & \textbf{88.0}   \\
GRPO & \underline{53.3} & \underline{57.9} & 61.2 & 63.5 & 65.3& 67.0& 68.9& 70.8&  72.6\\
\rowcolor{gray!10} GRPO-N & 53.0 & 57.9 & \underline{61.3} & \underline{63.7} & 65.3 &66.8 & 68.5 &70.2 &72.2 \\
GSPO & 53.0 & 57.7 & 61.0 & 63.4 & 65.2 &66.8 &68.6 &70.5 &72.8 \\
\rowcolor{gray!10} GSPO-N & \textbf{54.1} & \textbf{58.8} & \textbf{62.0} & \textbf{64.1} & \underline{65.9} & \underline{67.6} & \underline{69.5} & \underline{71.5} & \underline{73.4} \\
\bottomrule
\end{tabular}
\end{table*}
\newpage
\subsection{Discussion on RL Tricks}\label{app:discussion tricks}
We also review some widely adopted RL tricks, such as: increasing the number of
rollout samples, raising the training temperature, more than three actions case.

$\bullet$ \textbf{The number of rollout samples}: a larger $G$ leads to more stable optimization and
does not affect our main findings and conclusions, the two-stage dynamic of exploitation and exploration.
\begin{figure}[htb]
  \centering
    \begin{subfigure}[b]{0.245\textwidth}
        \centering
        \includegraphics[height=3.45cm]{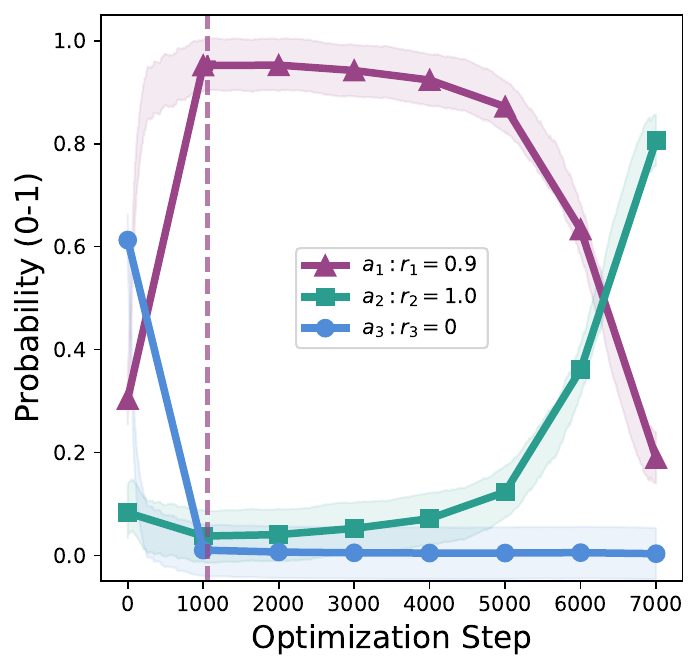}
    \end{subfigure}
    \begin{subfigure}[b]{0.245\textwidth}
        \centering
        \includegraphics[height=3.45cm]{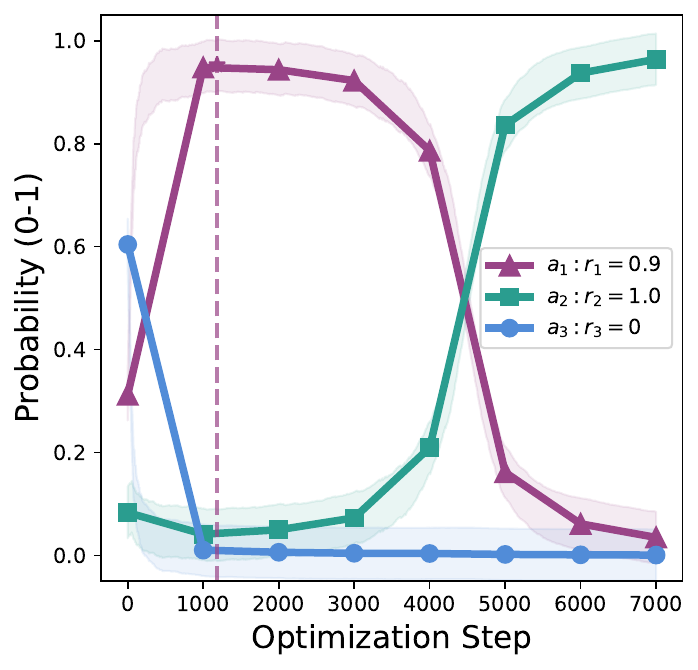}
    \end{subfigure}
    \begin{subfigure}[b]{0.245\textwidth}
        \centering
        \includegraphics[height=3.45cm]{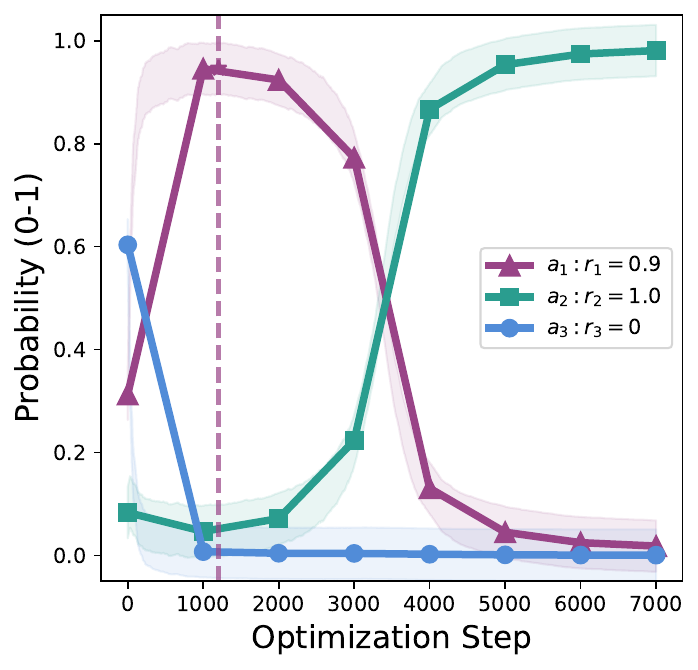}
    \end{subfigure}
    \begin{subfigure}[b]{0.245\textwidth}
        \centering
        \includegraphics[height=3.45cm]{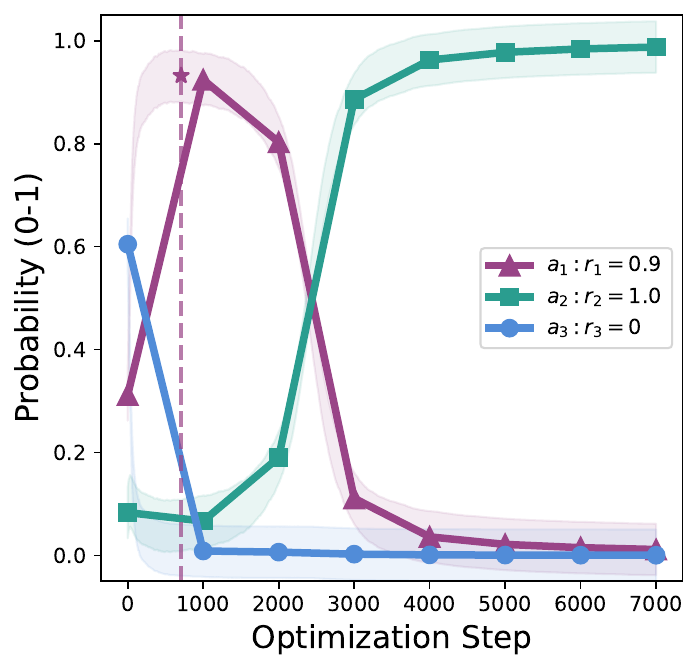}
    \end{subfigure} 
  \caption{Dynamics of the policy probability mass during optimization for different numbers of rollout samples ($[2, 3, 5, 10]$), with action rewards $r$ and initial policy probabilities $\pi$ held constant.}
    \vspace{-5mm}
\end{figure}

$\bullet$ \textbf{Raising the training temperature}: according to \citet{Polaris2025}, increasing the sampling temperature enhances the diversity of generated outcomes. Consequently, employing a higher temperature is advisable to obtain a more varied set of trajectories for model training. The default temperature value in our other experiments is $\tau=1.0$, that is $\pi(\cdot)=\text{Softmax}(\mathbf{z}/\tau)$.
\begin{figure}[htb]
  \centering
    \begin{subfigure}[b]{0.245\textwidth}
        \centering
        \includegraphics[height=3.45cm]{pics1/t_G_2_r1_0.9_r2_1.0_r1_0_1.8_0.5_2.5.pdf}
    \end{subfigure}
    \begin{subfigure}[b]{0.245\textwidth}
        \centering
        \includegraphics[height=3.45cm]{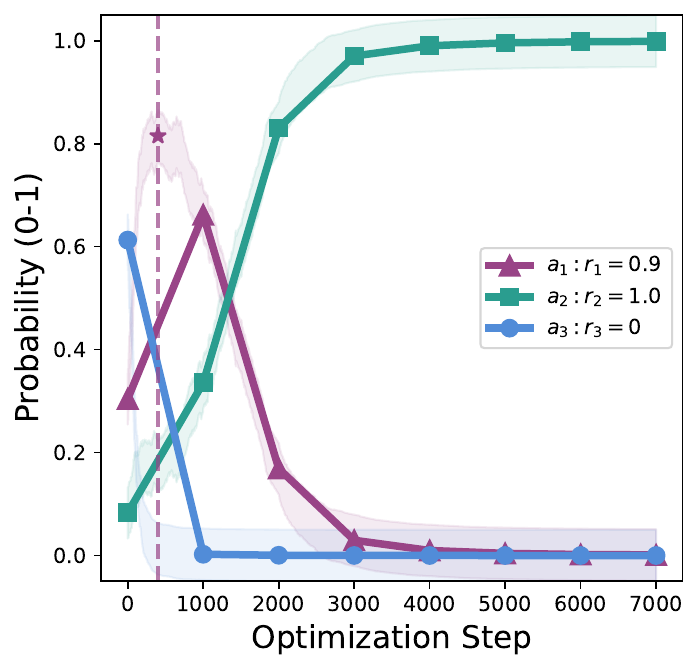}
    \end{subfigure}
    \begin{subfigure}[b]{0.245\textwidth}
        \centering
        \includegraphics[height=3.45cm]{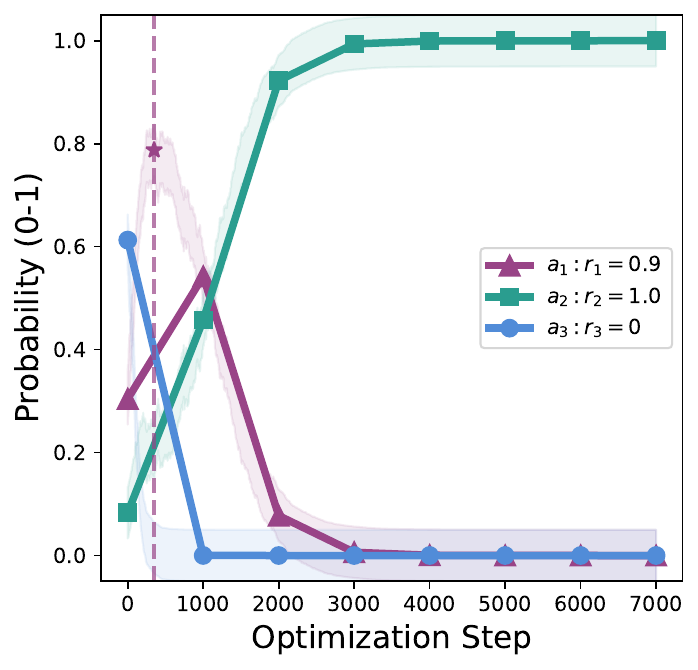}
    \end{subfigure}
    \begin{subfigure}[b]{0.245\textwidth}
        \centering
        \includegraphics[height=3.45cm]{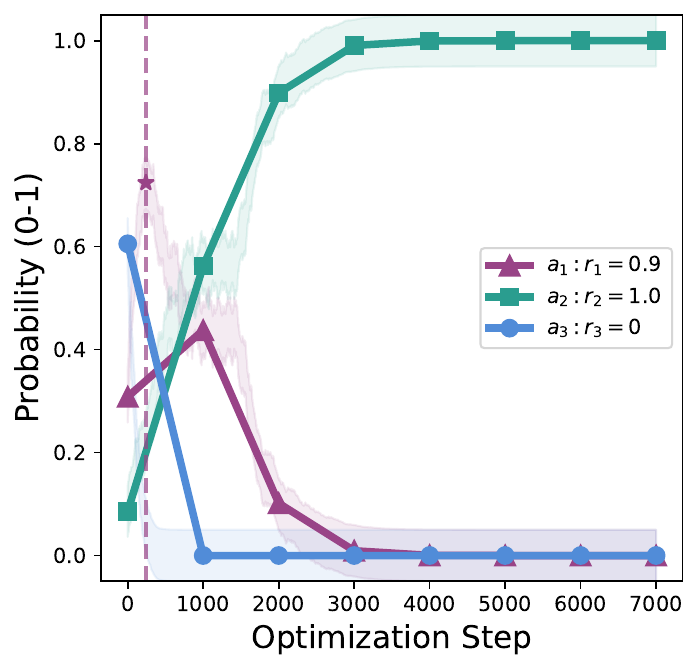}
    \end{subfigure} 
  \caption{Dynamics of the policy probability mass during optimization for different training temperature values ($[1, 2, 5, 20]$), with action rewards $r$ and initial policy probabilities $\pi$ held constant.}
    \vspace{-5mm}
\end{figure}

$\bullet$ \textbf{More than three actions case}: from Theorem \ref{main:proposition2}, we have: $\mathbb{E}(\Delta z(a_i))=\eta\pi(a_i)\left[(1-\pi(a_i))\hat{A}(a_i)-\sum_{j\neq i}^4\pi(a_j)\hat{A}(a_j)\right]$. 

Denote the action with the largest $r$ as $a_{\text{max}}$ and the action with the smallest $r$ as $a_{\text{min}}$. It can be readily shown that $\mathbb{E}(\Delta z(a_{\text{max}}))$ is always greater than or equal to 0, while $\mathbb{E}(\Delta z(a_{\text{min}}))$ is consistently less than 0. For other actions, the probabilities generally exhibit a two-stage dynamic.
\begin{figure}[htb]
  \centering
    \begin{subfigure}[b]{0.245\textwidth}
        \centering
        \includegraphics[height=3.45cm]{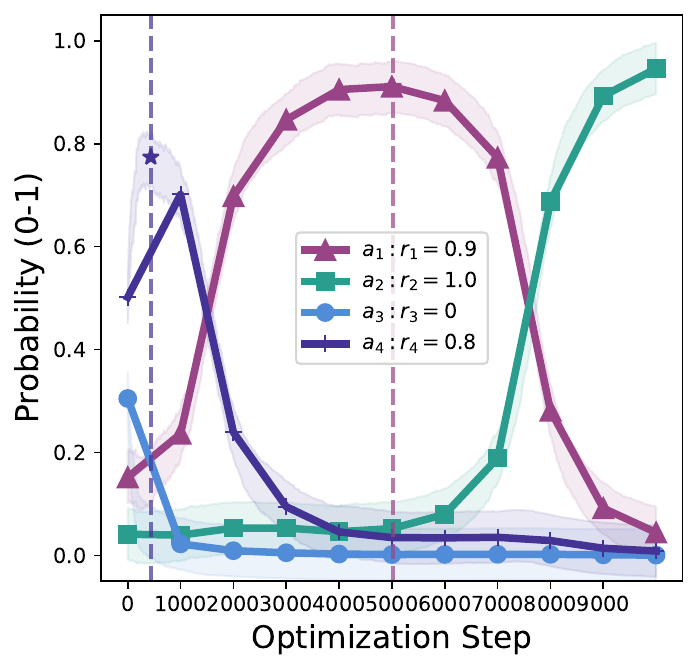}
    \end{subfigure}
    \begin{subfigure}[b]{0.245\textwidth}
        \centering
        \includegraphics[height=3.45cm]{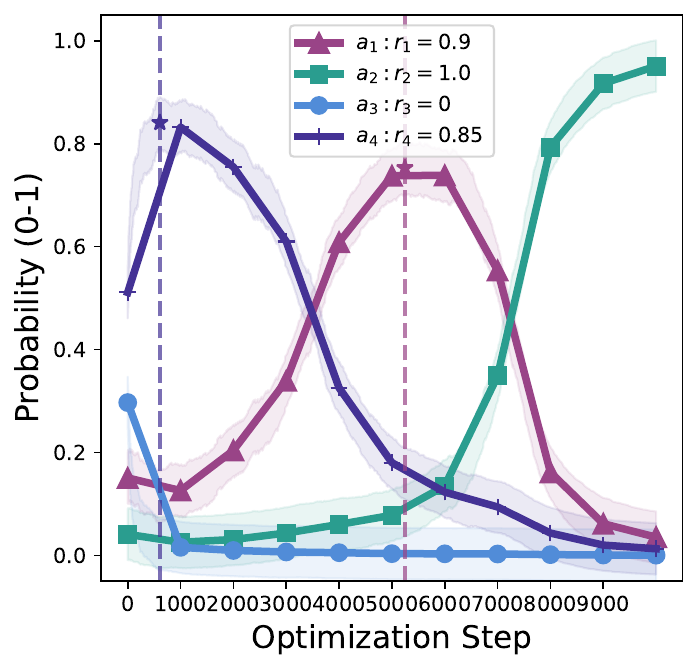}
    \end{subfigure}
    \begin{subfigure}[b]{0.245\textwidth}
        \centering
        \includegraphics[height=3.45cm]{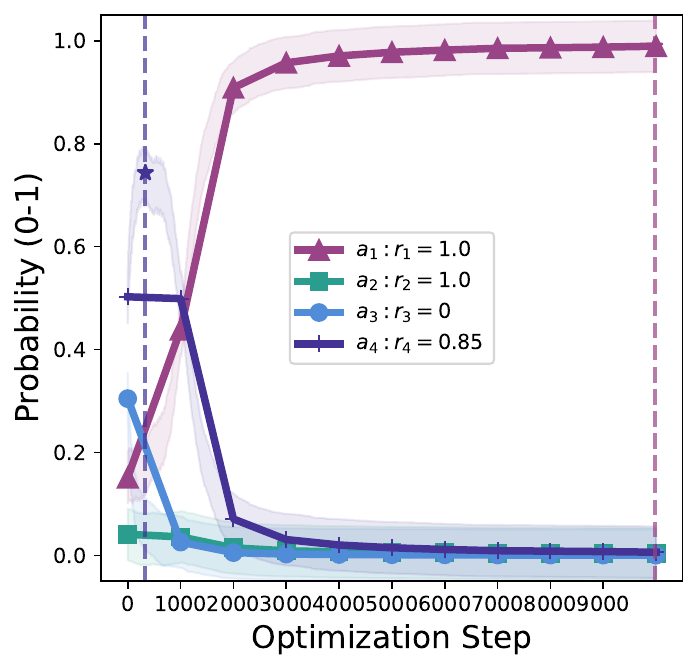}
    \end{subfigure}
    \begin{subfigure}[b]{0.245\textwidth}
        \centering
        \includegraphics[height=3.45cm]{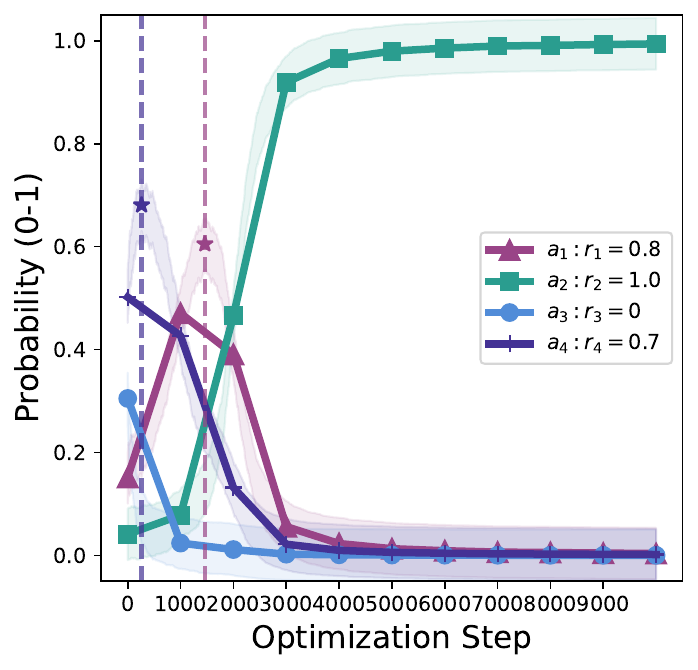}
    \end{subfigure} 
  \caption{Dynamics of the policy probability mass during optimization for action space consists of four actions, with action rewards $r$ and initial policy probabilities $\pi$ held constant.}
    \vspace{-5mm}
\end{figure}

\newpage
\subsection{Entropy Behavior Analysis from Different Levels}
\textbf{Set up.} To investigate how different algorithms reshapes the sampling distribution, we compare the base model with the RLVR trained model (using the experimental setup detailed in Section \ref{main:-n}).

Following \citet{wu2025invisible},  we quantify changes in the output distribution using two entropy metrics:

$\bullet$  Answer-Level Entropy: Let $\{o^{(1)}, \ldots, o^{(G)}\}$ represent the answers extracted from each generated sequence $\mathbf{y}_{i}$ (with NA denoting incomplete or invalid outputs), and let $\{o_{1}^{*}, \ldots, o_{M}^{*}\}$ be the set of $M$ distinct answers. Denote by $f_j$ the frequency of answer $o_j^{*}$, and define the empirical probability as $p_j = \frac{f_j}{G}$. The answer-level entropy is then defined as: $\text{AnswerEntropy} = -\sum_{j=1}^{M} p_j \log p_j$. This metric quantifies the global diversity across output completions, where lower entropy values indicate a greater degree of answer-level certainty.

$\bullet$  Token-Level Entropy: Let $\mathcal{V}$ denote the vocabulary and $\mathbf{y}_{i} = (y_{i,1}, y_{i,2}, \ldots, y_{i,T})$ denote the $i$-th generated sequence of length $T$ for $1 \leq i \leq N$. At each timestep $t$, the model outputs a probability distribution $p^{(i)}_t(v)$ over vocabulary tokens $v \in \mathcal{V}$. The entropy of this distribution is given by: $H(p^{(i)}_t) = -\sum_{v \in \mathcal{V}} p^{(i)}_t(v) \log p^{(i)}_t(v)$. The average token-level entropy over all $G$ sequences and all timesteps is then computed as: $\text{TokenEntropy} = \frac{1}{G} \frac{1}{T} \sum_{i=1}^{G} \sum_{t=1}^{T} H(p^{(i)}_t)$.

\begin{table*}[ht]
\centering
\caption{Summary of entropy metrics across math reasoning benchmarks.}
\label{tab:entropy}
\resizebox{1.0\textwidth}{!}{%
\begin{tabular}{c|c|cccccc}
\toprule
\textbf{Metric} & \textbf{Model} & \textbf{AMC 2023} & \textbf{AIME 2024}  & \textbf{AIME 2025}   \\
\midrule 
  &   \texttt{Qwen2.5-Math-7B}       & 2.563  & 4.263  & 4.904   \\
  &   GRPO       & 1.667  & 3.691  & 4.916   \\
  &   GRPO-N       & 1.741  & 3.748  & 4.957   \\
  Answer-Level &   GSPO & 1.641  & 3.484  & 4.878   \\
  Entropy &   GSPO-N & 1.696 & 3.712 & 4.943  \\ \cline{2-8}\addlinespace[3pt]
  &   \texttt{Llama-3.2-3B-Instruct}      & 3.937  & 5.333  & 6.062   \\
  &   GRPO       & 2.513  & 2.888  & 3.694   \\
  &  GRPO-N      & 2.633  & 3.042  & 4.370   \\ 
\midrule
  &   \texttt{Qwen2.5-Math-7B}       & 0.118  & -   & 0.134   \\
  &   GRPO       & 0.063  &  -  & 0.054   \\
  &   GRPO-N       & 0.124  &  -  & 0.092   \\
  Token-Level &   GSPO & 0.087  &  -  & 0.065   \\
  Entropy &   GSPO-N & 0.095 & -  & 0.075  \\ \cline{2-8}\addlinespace[3pt]
  &   \texttt{Llama-3.2-3B-Instruct}      & 0.127  &- & 0.107    \\
  &   GRPO       & 0.123  &  -  & 0.089   \\
  &  GRPO-N      & 0.105  &  -  & 0.107 & & &  \\ 
\bottomrule
\end{tabular}}
\end{table*}

\subsection{Full Procedure for Case Study}\label{app:case}
A comparison of the correct responses of GRPO and GRPO-N in mathematical reasoning task (a test case from AMC 2023). The base model is \texttt{Qwen2.5-Math-7B}, which is trained on the training set of MATH dataset.

\textbf{GRPO:} It exhibits a tendency to repeatedly make similar errors, such as consistently generating code with TypeError issues, and struggles to correct them.
\begin{lstlisting}
To solve the problem, we need to find the maximum value of the 
imaginary part of the complex number \(z\) that satisfies the
equation \(|1 + z + z^2| = 4\). Let's break down the problem step-by-step.

1. Express \(z\) in terms of its real and imaginary parts:
   Let \(z = x + yi\), where \(x\) and \(y\) are real numbers, 
   and \(i\) is the imaginary unit. 
   Then \(z^2 = (x + yi)^2 = x^2 - y^2 + 2xyi\).

2. Substitute \(z\) and \(z^2\) into the given equation:
   \[
   |1 + z + z^2| = |1 + (x + yi) + (x^2 - y^2 + 2xyi)| 
   = |(1 + x + x^2 - y^2) + (y + 2xy)i| = 4
   \]
   The magnitude of a complex number \(a + bi\) 
   is given by \(\sqrt{a^2 + b^2}\). So we have:
   \[
   \sqrt{(1 + x + x^2 - y^2)^2 + (y + 2xy)^2} = 4
   \]
   Squaring both sides, we get:
   \[
   (1 + x + x^2 - y^2)^2 + (y + 2xy)^2 = 16
   \]

3. Maximize the imaginary part \(y\):
   We need to find the maximum value of \(y\) 
   that satisfies the above equation. 
   Let's use Python and SymPy to solve this.

```python
import sympy as sp

# Define variables
x, y = sp.symbols('x y', real=True)

# Define the equation
equation = (1 + x + x**2 - y**2)**2 + (y + 2*x*y)**2 - 16

# Solve the equation for y in terms of x
solutions = sp.solve(equation, y)

# Find the maximum value of y
max_y = max([sp.simplify(sol) for sol in solutions])

# Simplify the maximum value of y
max_y_simplified = sp.simplify(max_y)

# Extract m and n from the simplified form of max_y
m, n = sp.fraction(max_y_simplified**2)

# Since m and n are relatively prime, we can directly sum them
result = m + n

print(result)
```
```output
TypeError: cannot determine truth value of Relational
```
The error occurs because SymPy's `solve` function 
can return symbolic solutions that include inequalities 
or other relational expressions. 
To handle this, we need to ensure that we are only
considering real solutions and then find the maximum value of \(y\).

Let's refine the approach by solving the equation numerically
and then finding the maximum value of \(y\).
We can use SymPy's `nsolve` function to find 
numerical solutions and then determine the maximum value of \(y\).

Here's the revised code:

```python
import sympy as sp

# Define variables
x, y = sp.symbols('x y', real=True)

# Define the equation
equation = (1 + x + x**2 - y**2)**2 + (y + 2*x*y)**2 - 16

# Find critical points by solving the equation
critical_points = sp.solve(equation, y)

# We need to find the maximum value of y
max_y = None
for point in critical_points:
    # Check if the solution is real
    if point.is_real:
        if max_y is None or point > max_y:
            max_y = point

# Simplify the maximum value of y
max_y_simplified = sp.simplify(max_y)

# Extract m and n from the simplified form of max_y
m, n = sp.fraction(max_y_simplified**2)

# Since m and n are relatively prime, we can directly sum them
result = m + n

print(result)
```
```output
TypeError: unsupported operand type(s) for ** or pow(): 'NoneType' and 'int'
```
The error indicates that the `max_y` variable is `None`,
which means that the solutions returned 
by `sp.solve` did not yield any real values for \(y\).
This suggests that we need to handle the solutions
more carefully and ensure that we are only considering real solutions.

Let's use a different approach to find the maximum value of \(y\).
We can use numerical methods to find the maximum value of \(y\)
by evaluating the solutions over a range of \(x\) values.

Here's the revised code using numerical methods:

```python
import sympy as sp
import numpy as np

# Define variables
x, y = sp.symbols('x y', real=True)

# Define the equation
equation = (1 + x + x**2 - y**2)**2 + (y + 2*x*y)**2 - 16

# Function to find the maximum value of y for a given x
def find_max_y():
    max_y = None
    for x_val in np.linspace(-10, 10, 1000):
        solutions = sp.solve(equation.subs(x, x_val), y)
        for sol in solutions:
            if sol.is_real:
                if max_y is None or sol > max_y:
                    max_y = sol
    return max_y

# Find the maximum value of y
max_y = find_max_y()

# Simplify the maximum value of y
max_y_simplified = sp.simplify(max_y)

# Extract m and n from the simplified form of max_y
m, n = sp.fraction(max_y_simplified**2)

# Since m and n are relatively prime, we can directly sum them
result = m + n

print(result)
```
```output
timeout_decorator.timeout_decorator.TimeoutError: 'Timed Out'
```
The numerical approach can be computationally intensive and
may not be the most efficient way to solve this problem.
Instead, let's use a more analytical approach to find the maximum value of \(y\).

We can use the fact that the equation \(|1 + z + z^2| = 4\) represents
a certain geometric shape in the complex plane.
To find the maximum value of the imaginary part of \(z\),
we can use the properties of complex numbers and their magnitudes.

Let's rewrite the equation in a more manageable form. We have:
\[
|1 + z + z^2| = 4
\]
This can be written as:
\[
|1 + x + yi + (x + yi)^2| = 4
\]
Simplifying the expression inside the magnitude, we get:
\[
|1 + x + yi + (x^2 - y^2 + 2xyi)| = 4
\]
\[
|1 + x + x^2 - y^2 + (y + 2xy)i| = 4
\]
The magnitude of a complex number \(a + bi\) 
is given by \(\sqrt{a^2 + b^2}\). So we have:
\[
\sqrt{(1 + x + x^2 - y^2)^2 + (y + 2xy)^2} = 4
\]
Squaring both sides, we get:
\[
(1 + x + x^2 - y^2)^2 + (y + 2xy)^2 = 16
\]
To find the maximum value of \(y\), we can use calculus to find
the critical points of this equation. 
However, a more straightforward approach is to recognize
that the maximum value of \(y\) will occur 
when the imaginary part of \(z\) is maximized.

By analyzing the equation, we can see that the maximum value of \(y\)
occurs when the real part of the expression inside the magnitude
is minimized and the imaginary part is maximized.
This happens when \(x = -\frac{1}{2}\), which simplifies the equation to:
\[
\left(1 - \frac{1}{2} + \left(-\frac{1}{2}\right)^2 
- y^2\right)^2 + (y + 2\left(-\frac{1}{2}\right)y)^2 = 16
\]
Simplifying further, we get:
\[
\left(1 - \frac{1}{2} + \frac{1}{4} - y^2\right)^2 + (y - y)^2 = 16
\]
\[
\left(\frac{3}{4} - y^2\right)^2 = 16
\]
Taking the square root of both sides, we get:
\[
\left|\frac{3}{4} - y^2\right| = 4
\]
This gives us two cases:
1. \(\frac{3}{4} - y^2 = 4\)
2. \(\frac{3}{4} - y^2 = -4\)

Solving the first case:
\[
\frac{3}{4} - y^2 = 4
\]
\[
y^2 = \frac{3}{4} - 4 = -\frac{13}{4}
\]
This is not a valid solution since \(y^2\) cannot be negative.

Solving the second case:
\[
\frac{3}{4} - y^2 = -4
\]
\[
y^2 = \frac{3}{4} + 4 = \frac{19}{4}
\]
So the maximum value of \(y\) is:
\[
y = \sqrt{\frac{19}{4}} = \frac{\sqrt{19}}{2}
\]
Thus, the maximum value of the imaginary part of \(z\) 
is \(\frac{\sqrt{19}}{2}\). 
The values \(m\) and \(n\) are 19 and 2, respectively,
and they are relatively prime. Therefore, \(m + n = 19 + 2 = 21\).

The final answer is:
\[
\boxed{21}
\]
\end{lstlisting}

\textbf{GRPO-N:} It exhibits a reduction in erroneous code generation relative to GRPO.
\begin{lstlisting}
To solve the problem, we need to find the maximum imaginary part of a 
complex number \(z\) that satisfies \(|1 + z + z^2| = 4\). Let's break
down the problem into manageable steps:

1. Express \(z\) in terms of its real and imaginary parts:
Let \(z = x + yi\), where \(x\) and \(y\) are real numbers.
2. Substitute \(z\) into the given equation: 
The equation becomes \(|1 + (x + yi) + (x + yi)^2| = 4\). 
Simplify the expression inside the absolute value:
   \[
   1 + (x + yi) + (x + yi)^2 = 1 + x + yi + x^2 - y^2 + 2xyi 
   = (1 + x + x^2 - y^2) + (y + 2xy)i
   \]
   The magnitude of this complex number is:
   \[
   \sqrt{(1 + x + x^2 - y^2)^2 + (y + 2xy)^2} = 4
   \]
3. Square both sides to remove the square root: 
This gives us the equation:
   \[
   (1 + x + x^2 - y^2)^2 + (y + 2xy)^2 = 16
   \]
4. Find the maximum value of \(y\):
We need to maximize \(y\) subject to the above equation.
We can use calculus or numerical methods to find the maximum value of \(y\),
but for now, let's proceed with a numerical approach 
using Python to find the maximum value of \(y\).

We'll use a grid search method to find the maximum value of \(y\).
We'll iterate over a range of \(x\) and \(y\) values and 
heck which values satisfy the equation. 
We'll keep track of the maximum \(y\) value found.

Let's write the Python code to perform this search.
```python
import numpy as np

# Define the function that represents the equation
def equation(x, y):
    return (1 + x + x**2 - y**2)**2 + (y + 2*x*y)**2 - 16

# Initialize variables to keep track of the maximum y value
max_y = -np.inf
max_x = 0

# Define the range and step size for x and y
x_vals = np.linspace(-5, 5, 500)
y_vals = np.linspace(-5, 5, 500)

# Iterate over the range of x and y values
for x in x_vals:
    for y in y_vals:
        if np.isclose(equation(x, y), 0, atol=1e-5):
            if y > max_y:
                max_y = y
                max_x = x

print((max_y, max_x))
```
```output
(-0.030060120240481286, -1.997995991983968)
```
The numerical method suggests that the maximum value of \(y\) 
is approximately \(-0.03\), which is very close to 0.
This indicates that the maximum imaginary part of \(z\) is likely 0,
but let's verify this by checking the exact form of the solution.

To find the exact maximum imaginary part, we can use 
the fact that the equation \(|1 + z + z^2| = 4\) represents 
a curve in the complex plane. 
We can analyze this curve to find the maximum imaginary part analytically.

Let's consider the equation again:
\[
(1 + x + x^2 - y^2)^2 + (y + 2xy)^2 = 16
\]
To maximize \(y\), we can look at the critical points of this equation. 
We can use the method of Lagrange multipliers 
or directly analyze the equation to find the maximum value of \(y\).

By analyzing the equation, we can see that 
the maximum imaginary part occurs when \(x = -1/2\).
Substituting \(x = -1/2\) into the equation, we get:
\[
(1 - \frac{1}{2} + \frac{1}{4} - y^2)^2 + (y - y)^2 
= 16 \implies \left(\frac{3}{4} - y^2\right)^2 = 16
\]
Solving for \(y\), we get:
\[
\frac{3}{4} - y^2 = \pm 4 \implies y^2 = \frac{3}{4} \pm 4
\]
Since \(y^2\) must be non-negative, we take the positive root:
\[
y^2 = \frac{3}{4} + 4 = \frac{19}{4} \implies y = \pm \frac{\sqrt{19}}{2}
\]
The maximum imaginary part is \(\frac{\sqrt{19}}{2}\). 
Therefore, \(m = 19\) and \(n = 2\), so \(m + n = 21\).

Thus, the final answer is:
\[
\boxed{21}
\]
\end{lstlisting}

\end{document}